\newcommand{\GDigit}[1]{\ensuremath{\mathcal{D}_{1}(#1)}}
\newcommand{\Eq}[1]{Eq.~(\ref{#1})}
\newcommand{\Conv}[1]{\ensuremath{\mathrm{conv}(#1)}}
\newcommand{\Equ}[1]{Eq.~(\ref{#1})}
\newcommand{\DC}{\ensuremath{C}}
\newcommand{\PT}[1]{\ensuremath{{\DC}_{#1}}}
\newcommand{\PTS}[2]{\ensuremath{[{\DC}_{#1}{\DC}_{#2}]}}
\newcommand{\SPRED}[2]{\ensuremath{S(#1,#2)}}
\newtheorem{thm}{Theorem}[section]
\newtheorem{hypo}[thm]{Hypothesis}
\newcommand{\VEC}[2]{\ensuremath{(#1, #2)}}
\newcommand{\RefFig}[1]{Fig.~\ref{#1}}
\newcommand{\RefDefinition}[1]{Definition~\ref{#1}}
\newcommand{\RefTheorem}[1]{Theorem~\ref{#1}}
\newcommand{\RefProposition}[1]{Proposition~\ref{#1}}
\newcommand{\RefLemma}[1]{Lemma~\ref{#1}}
\newcommand{\RefCorollary}[1]{Corollary~\ref{#1}}
\newcommand{\Lnorm}[1]{\ensuremath{\mathcal{L}^1(#1)}}
\begin{document}

  \title{Maximal digital straight segments and convergence of discrete
  geometric estimators}

  \author{\normalsize 
    François de Vieilleville, Jacques-Olivier Lachaud   \\
    \small 
    LaBRI, Univ. Bordeaux 1 \\
    351 cours de la Lib\'{e}ration, 33405 Talence, France.\\
    {\tt \{devieill,lachaud\}@labri.fr} \\~\\
    \normalsize Fabien Feschet\\
    \small LLAIC1, Clermont-Ferrand \\
    Campus des C{\'e}zeaux, 63172 Aubi{\`e}re Cedex, France\\
    {\tt feschet@llaic3.u-clermont1.fr}
  \normalsize}

  \maketitle

  \thispagestyle{empty}
  \begin{abstract}
    \noindent
    Discrete geometric estimators approach geometric quantities on
    digitized shapes without any knowledge of the continuous shape. A
    classical yet difficult problem is to show that an estimator
    asymptotically converges toward the true geometric quantity as the
    resolution increases. We study here the convergence of local
    estimators based on Digital Straight Segment (DSS) recognition. It
    is closely linked to the asymptotic growth of maximal DSS, for
    which we show bounds both about their number and sizes. These
    results not only give better insights about digitized curves but
    indicate that curvature estimators based on local DSS recognition
    are not likely to converge. We indeed invalidate an hypothesis
    which was essential in the only known convergence theorem of a
    discrete curvature estimator.
    The proof involves results from arithmetic properties of digital
    lines, digital convexity, combinatorics, continued fractions and
    random polytopes.

  \end{abstract}

  \section{Introduction}
  Estimating geometric features of shapes or curves solely on their
  digitization is a classical problem in image analysis and pattern
  recognition. Some of the geometric features are global: area,
  perimeter, moments. Others are local: tangents, normals,
  curvature. Algorithms that performs this task on digitized objects
  are called {\em discrete geometric estimators}. An
  interesting property these estimators should have is to converge
  towards the continuous geometric measure as the digitization
  resolution increases. However, few estimators have been proved to be
  convergent. In all works, shapes are generally supposed to have a
  smooth boundary (at least twice differentiable) and either to be
  convex or to have a finite number of inflexion points. The shape
  perimeter estimation has for instance been tackled in
  \cite{Kovalevsky92}. It proved the convergence of a perimeter
  estimator based on curve segmentation by maximal DSS. The speed of
  convergence of several length estimators has also been studied in
  \cite{Coeurjolly03}. Klette and \v{Z}uni\'{c} \cite{Klette00} survey results
  about the convergence (and the speed of convergence) of several
  global geometric estimators. They show that discrete
  moments converge toward continuous moments.
  
  As far as we know, there is only one work that deals with the
  convergence of local geometric estimators \cite{Coeurjolly02}. The
  symmetric tangent estimator appears to be convergent subject to an
  hypothesis on the growth of DSS as the resolution increases
  (see Hypothesis~\ref{hyp:coeurjolly}).
  The same hypothesis entails that a curvature estimator is
  convergent: it is based on DSS recognition and circumscribed circle
  computation (see \RefDefinition{def:curvature-by-circumcircle}).
  
  In this paper, we relate the number and the lengths of DSS to the
  number and lengths of edges of convex hulls of digitized
  shapes. Using arguments related to digital convex polygons and a
  theorem induced by random polytopes theory \cite{Balog1991}, we
  estimate the asymptotic behaviour of both quantities. We
  theoretically show that maximal DSS do not follow the hypothesis
  used in \cite{Coeurjolly02}. Experiments confirm
  our result. The convergence theorem is thus not applicable to
  digital curves. As a consequence, the existence of convergent
  digital curvature estimators
  remains an open problem. The paper is organized as follows. First,
  we recall some standard notions of digital geometry and combinatoric
  representation of digital lines, i.e. patterns. The relations
  between maximal segments and edges of convex digital polygons are
  then studied to get bounds on maximal segments lengths and
  number. Finally, the asymptotic behaviour of maximal segments is
  deduced from the asymptotic behaviour of convex digital
  polygons. The growth of some DSS is thus proved to be too slow to
  ensure the convergence of curvature estimation. This theoretical
  result is further confirmed by experiments.


  \section{Maximal digital straight segments}
  We restrict our study to the geometry of 4-connected digital
  curves. A digital object is a set of pixels and its boundary in
  $\mathbb{R}^2$ is a collection of vertices and edges. The boundary
  forms a 4-connected curve in the sense used in the present
  paper. Our work may easily be adapted to 8-connected curves. In the
  paper, all the reasoning are made in the first octant, but extends
  naturally to the whole digital plane. The digital curve is denoted
  by $\DC$. Its points $(\PT{k})$ are assumed to be indexed.  A set of
  successive points of $\DC$ ordered increasingly from index $i$ to
  $j$ will be conveniently denoted by $\PTS{i}{j}$ when no
  ambiguities are raised.

  \subsection{Standard line, digital straight segment, maximal segments}
  \begin{definition}(R\'eveill\`es \cite{Rev91})
    The set of points $(x,y)$ of the digital plane verifying $\mu \le
    ax-by < \mu + |a|+|b|$, with $a$, $b$ and $\mu$ integer numbers, is
    called the {\em standard line} with slope $a/b$ and shift $\mu$.
  \end{definition}
  The {\em standard lines} are the 4-connected discrete lines. The
  quantity $ax-by$ is called the {\em remainder} of the line. The
  points whose remainder is $\mu$ (resp. $\mu+|a|+|b|-1$) are called upper
  (resp. lower) leaning points. The principal upper and lower leaning
  points are defined as those with extremal $x$ values.  Finite
  connected portions of digital lines define \textit{digital straight
    segment}.  Since we work with restricted parts of $\DC$, we always
  suppose that indices are totally ordered on this part.

  \begin{definition} 
    A set of successive points $\PTS{i}{j}$ of $\DC$ is a {\em digital
      straight segment (DSS)} iff there exists a standard line $D(a,b,\mu)$
    containing them. The predicate ``$\PTS{i}{j}$ is a DSS'' is denoted by
    $\SPRED{i}{j}$. 
  \end{definition}

  The first index $j$, $i \le j $, such that $\SPRED{i}{j}$ and $\neg
  \SPRED{i}{j+1}$ is called the {\em front} of $i$. The map associating any
  $i$ to its front is denoted by $F$. Symmetrically, the first index
  $i$ such that $\SPRED{i}{j}$ and $\neg \SPRED{i-1}{j}$ is called the {\em
    back} of $j$ and the corresponding mapping is denoted by $B$. 







  
  Maximal segments form the longest possible DSS in the curve. They
  are essential when analyzing digital curves: they provide tangent
  estimations \cite{Feschet99,Lachaud05a}, they are used for
  polygonizing the curve into the minimum number of segments
  \cite{Feschet03}.

  \begin{definition}
    \label{def:maximal-segment}
    Any set of points $\PTS{i}{j}$ is called a {\em maximal segment} iff
    any of the following equivalent characterizations holds: (1)
    $\SPRED{i}{j}$ and $\neg \SPRED{i}{j+1}$ and $\neg \SPRED{i-1}{j}$, (2)
    $B(j)=i$ and $F(i)=j$, (3) $\exists k, i=B(k)$ and $j=F(B(k))$, (4)
    $\exists k', i=B(F(k'))$ and $j=F(k')$.
  \end{definition}





  
  From characterizations (3) and (4) of
  Definition~\ref{def:maximal-segment}, any DSS $\PTS{i}{j}$ and hence
  any point belongs to at least two maximal segments (possibly
  identical) $\PTS{B(j)}{F(B(j))}$ and $\PTS{B(F(i))}{F(i)}$.

  \subsection{Patterns and DSS}
  We here recall a few properties about \emph{patterns} composing DSS
  and their close relations with continued fractions. They constitute
  a powerful tool to describe discrete lines with rational slopes
  \cite{Berstel97,HarWri60}. Since we are in the first octant, the
  slopes are between 0 and 1.

  
  \begin{definition}
    Given a standard line $(a,b,\mu)$, we call
    \emph{pattern} of characteristics $(a,b)$ the succession of Freeman moves
    between any two consecutive upper leaning points. The Freeman moves
    defined between any two consecutive lower leaning points is the previous
    word read from back to front and is called the \emph{reversed pattern}.
  \end{definition}
  A pattern $(a,b)$ embedded anywhere in the digital plane is
  obviously a DSS $(a,b,\mu)$ for some $\mu$. Since a DSS contains at
  least either two upper or two lower leaning points, a DSS
  $(a,b,\mu)$ contains at least one \emph{pattern} or one
  \emph{reversed pattern} of characteristics $(a,b)$.



  \begin{definition} 
    We call \emph{simple continued fraction} and we write:
    \[ z = a/b = [0,u_{1} \ldots, u_i, \ldots, u_n]  \quad \rm{with} \quad 
    z = 0 + \cfrac{1}{ u_1 + \cfrac{1}{\ldots +
          \cfrac{1}{ u_{n-1} + \cfrac{1}{u_n}}}} 
    \] 
    We call \emph{$k$-th convergent} the simple continued fraction
    formed of the $k+1$ first partial quotients: $z_k =
    \frac{p_{k}}{q_{k}} =[0,u_{1},\ldots, u_k]$.
  \end{definition}  
  There exists a recursive transformation for computing the pattern of
  a standard line from the \emph{simple continued fraction} of its
  slope \cite{Berstel97}. We call $E$ the mapping from the set of
  positive rationnal number smaller than one onto Freeman-code's words
  defined as follows. First terms are stated as $E(z_0) = 0$ and
  $E(z_1) = 0^{u_1}1$ and others are expressed recursively:
  \begin{eqnarray}
    E(z_{2i+1}) & = & E(z_{2i})^{u_{2i+1}} E(z_{2i-1})
    \label{pattern:rec:odd}\\ 
    E(z_{2i}) & = & E(z_{2i - 2}) E(z_{2i-1})^{u_{2i}}
    \label{pattern:rec:even}
  \end{eqnarray}

  In the following, the \emph{complexity} of a pattern is the depth of its
  decomposition in simple continued fraction. We recall a few more relations:
   \begin{eqnarray}
     p_{k}q_{k-1} - p_{k-1}q_{k} = (-1)^{k+1} \label{pattern:rec:dif} \\
     (p_{k},q_{k})=u_{k}(p_{k-1},q_{k-1})+(p_{k-2},q_{k-2}) \label{pattern:rec}
   \end{eqnarray} 

  We now focus on computing vector relations between leaning points
  (upper and lower) inside a pattern. In the following we consider a
  DSS $(a,b,0)$ in the first octant starting at the origin and ending
  at its second lower leaning point (whose coordinate along the
  $x$-axis is positive). We define $a/b=z_{n}=[0,u_{1},\ldots, u_{n}]$
  for some $n$.  Points will be called $U_{1}$,$L_{1}$, $U_{2}$ and
  $L_{2}$ as shown in \RefFig{fig:aop-pattern}.  We can state
  $\mathbf{{U_{1}L_{1}}} = \mathbf{U_{2}L_{2}}$ and
  $\mathbf{U_{1}U_{2}} = \mathbf{L_{1}L_{2}} = \left(b,a\right)$. We
  recall that the Freeman moves of $[U_{1}L_{1}]$ are the same as
  those of $[U_{2}L_{2}]$. Furthermore Freeman moves between $U_{1}$
  and $U_{2}$ form the \emph{pattern} $(a,b)$ and those between
  $L_{1}$ and $L_{2}$ form the \emph{reversed pattern} $(a,b)$.
  
  \begin{figure}[htbp]
    \begin{center}
      \input{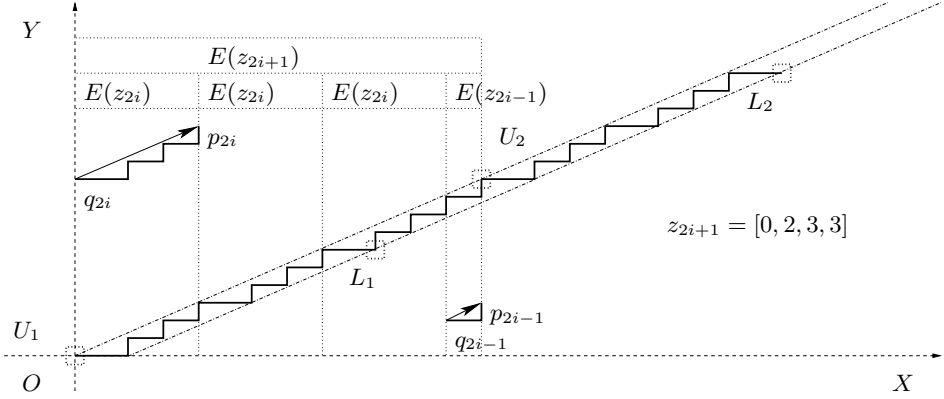}
      \caption{A DSS$(a,b,0)$ with an odd complexity slope, taken
        between origin and its second lower leaning point.}
      \label{fig:aop-pattern}
    \end{center}
  \end{figure}
  
  \begin{proposition}
    \label{prop:odd-pattern}
    A pattern with an odd complexity (say $n=2i+1$) is such that
    $\mathbf{U_{1}L_{1}}= (u_{2i+1}-1) \VEC{q_{2i}}{p_{2i}} +
    \VEC{q_{2i-1}}{p_{2i-1}} +\VEC{1}{-1}$ and $\mathbf{L_{1}U_{2}} =
    \VEC{q_{2i}-1}{p_{2i} + 1}$.  Moreover the DSS $[U_{1}L_{1}]$ has
    $E(z_{2i})^{u_{2i+1} -1}$ as a left factor, and the DSS
    $[L_{1}U_{2}]$ has $E(z_{2i-1})^{u_{2i}}$ as a right factor.
  \end{proposition}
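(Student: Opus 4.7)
The plan is to verify the displacement $\mathbf{U_1 L_1}$ by a direct Bezout-style check, deduce $\mathbf{L_1 U_2}$ by subtraction, and read off the two factorizations from a rewriting of the pattern word via \Eq{pattern:rec:odd} and \Eq{pattern:rec:even}.

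First I would pose the candidate point $P = U_1 + (u_{2i+1}-1)\VEC{q_{2i}}{p_{2i}} + \VEC{q_{2i-1}}{p_{2i-1}} + \VEC{1}{-1}$, whose coordinates reduce by \Eq{pattern:rec} to $\VEC{q_{2i+1}-q_{2i}+1}{p_{2i+1}-p_{2i}-1}$. Substituting into the remainder $p_{2i+1}x-q_{2i+1}y$ and invoking \Eq{pattern:rec:dif} (whose sign is $(-1)^{2i+2}=+1$) yields exactly $p_{2i+1}+q_{2i+1}-1$, so $P$ is a lower leaning point of the standard line $(p_{2i+1},q_{2i+1},0)$. The preceding lower leaning point on the infinite line, obtained by subtracting $\VEC{q_{2i+1}}{p_{2i+1}}$, has $x$-coordinate $1-q_{2i}\le 0$ and a negative $y$-coordinate, so it does not lie on the DSS; hence $P=L_1$, which is the first formula. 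The second then follows from $\mathbf{L_1 U_2}=\mathbf{U_1 U_2}-\mathbf{U_1 L_1}=\VEC{q_{2i+1}}{p_{2i+1}}-\mathbf{U_1 L_1}=\VEC{q_{2i}-1}{p_{2i}+1}$ by a second application of \Eq{pattern:rec}.

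For the factorizations, I would rewrite the pattern by inserting \Eq{pattern:rec:even} into the last $E(z_{2i})$ block:
\[ E(z_{2i+1})=E(z_{2i})^{u_{2i+1}-1}\,E(z_{2i})\,E(z_{2i-1})=E(z_{2i})^{u_{2i+1}-1}\,E(z_{2i-2})\,E(z_{2i-1})^{u_{2i}+1}. \]
The $\mathcal{L}^1$-norm of $\mathbf{U_1 L_1}$ equals $(u_{2i+1}-1)(q_{2i}+p_{2i})+q_{2i-1}+p_{2i-1}$, strictly larger than the cumulative length $(u_{2i+1}-1)(q_{2i}+p_{2i})+q_{2i-2}+p_{2i-2}$ of the first two blocks (since $u_{2i-1}\ge 1$), while the remaining length $q_{2i}+p_{2i}=u_{2i}(q_{2i-1}+p_{2i-1})+q_{2i-2}+p_{2i-2}$ of $\mathbf{L_1 U_2}$ is strictly smaller than the length $(u_{2i}+1)(q_{2i-1}+p_{2i-1})$ of the trailing run $E(z_{2i-1})^{u_{2i}+1}$. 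Consequently $L_1$ falls strictly inside the trailing block: the prefix $[U_1 L_1]$ begins with the full $E(z_{2i})^{u_{2i+1}-1}$, proving the left-factor claim, and the suffix $[L_1 U_2]$ ends with the last $u_{2i}$ complete copies of $E(z_{2i-1})$ from $E(z_{2i-1})^{u_{2i}+1}$, proving the right-factor claim.

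The main difficulty is the bookkeeping in the last step: one has to be sure $L_1$ does not align with a boundary between runs in the word decomposition, so that the two factorizations are genuine rather than truncated mid-copy. This reduces precisely to the two strict $\mathcal{L}^1$-inequalities above, both immediate from \Eq{pattern:rec}, but they are exactly the kind of off-by-one check that is easy to mishandle.
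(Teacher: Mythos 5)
Your proposal is correct and follows essentially the same route as the paper's proof: identify $L_1$ via the B\'ezout relation from \Eq{pattern:rec:dif} and a remainder check, obtain $\mathbf{L_1U_2}$ by subtraction from $\mathbf{U_1U_2}$, and read off both factorizations from the rewriting $E(z_{2i+1})=E(z_{2i})^{u_{2i+1}-1}E(z_{2i-2})E(z_{2i-1})^{u_{2i}+1}$. Your explicit $\mathcal{L}^1$ bookkeeping at the end only spells out the ``we see that'' step the paper leaves implicit.
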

  \begin{proof}
    From \Eq{pattern:rec:dif} we have: $p_{2i+1}q_{2i} -
    p_{2i}q_{2i+1} = (-1)^{2i+1+1} = 1$, which can be rewritten as: $a
    q_{2i} - b p_{2i} = 1$. $(q_{2i},p_{2i})$ are clearly the B\'ezout
    coefficients of $(a,b)$. 
    One can check that point $(b + 1
    -q_{2i},a -1 - p_{2i})$ is $L_1$: its remainder is $a+b-1$ and its
    $x$-coordinate while positive is smaller than $b$. We immediately
    get $\mathbf{U_{1}L_{1}} = \VEC{b + 1 -q_{2i}}{a -1 - p_{2i}} $.
    %
    Using \Eq{pattern:rec} yields: $\mathbf{U_{1}L_{1}}=\VEC{
    (u_{2i+1}-1) q_{2i} + q_{2i-1} +1}{(u_{2i+1} -1) p_{2i} + p_{2i-1}
    -1 }$. From $\mathbf{L_{1}U_{2}} = -\mathbf{U_{1}L_{1}} +
    \mathbf{U_{1}U_{2}}$, we further get that $\mathbf{L_{1}U_{2}} =
    \VEC{q_{2i} -1}{p_{2i}+1}$.  From \Eq{pattern:rec:odd}
    $E(z_{2i})^{u_{2i+1} -1}$ is a left factor of $[U_{1}U_{2}]$ but
    also of $[U_{1}L_{1}]$. Writing $E(z_{2i+1})$ as
    $E(z_{2i})^{u_{2i+1}-1}E(z_{2i-2})E(z_{2i-1})^{u_{2i}+1}$, and
    expanding $\mathbf{L_{1}U_{2}}$ as $\VEC{ u_{2i}q_{2i-1} +
    q_{2i-2} - 1 }{u_{2i}p_{2i-1} + p_{2i-2} +1}$ with
    \Eq{pattern:rec}, we see that $E(z_{2i-1})^{u_{2i}}$ is a right
    factor of $[L_{1}U_{2}]$.\qed
  \end{proof}
  \begin{proposition}
    \label{prop:even-pattern}
    A pattern with an even complexity (say $n=2i$) is such that
    $\mathbf{U_{1}L_{1}}=\VEC{q_{2i-1} + 1}{p_{2i-1} -1 }$ and
    $\mathbf{L_{1}U_{2}}= (u_{2i}-1)\VEC{q_{2i-1}}{p_{2i-1}} + \VEC{
    q_{2i-2}}{p_{2i-2}} + \VEC{-1}{1}$.  Moreover the DSS
    $[U_{1}L_{1}]$ has $E(z_{2i-2})^{u_{2i-1}}$ as a left factor, and
    the DSS $[L_{1}U_{2}]$ has $E(z_{2i-1})^{u_{2i}-1}$ as a right
    factor.
  \end{proposition}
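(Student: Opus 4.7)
The plan is to mirror the proof of \RefProposition{prop:odd-pattern} with the parities reversed throughout. First, I would apply \Eq{pattern:rec:dif} at $k=2i$: this yields $p_{2i} q_{2i-1} - p_{2i-1} q_{2i} = (-1)^{2i+1} = -1$, which, after substituting $a = p_{2i}$ and $b = q_{2i}$, rewrites as $a q_{2i-1} - b p_{2i-1} = -1$. Because of this opposite sign, the analogue of the B\'ezout step of the odd case will produce a lower leaning point with a slightly different offset formula, but the skeleton of the argument is unchanged.

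I would then guess the candidate $(q_{2i-1}+1, p_{2i-1}-1)$ for the point $L_1$ and verify it: its remainder is $a(q_{2i-1}+1) - b(p_{2i-1}-1) = -1 + a + b = a+b-1$, which is the defining remainder of a lower leaning point of $(a,b,0)$, and its positive abscissa $q_{2i-1}+1$ is strictly smaller than $b = q_{2i}$ by the standard continued-fraction inequality $q_{2i-1} < q_{2i}$, so it lies between $U_1$ and $U_2$. This pins down $\mathbf{U_1 L_1} = \VEC{q_{2i-1}+1}{p_{2i-1}-1}$. The vector $\mathbf{L_1 U_2}$ then follows from $\mathbf{L_1 U_2} = \mathbf{U_1 U_2} - \mathbf{U_1 L_1} = \VEC{b}{a} - \VEC{q_{2i-1}+1}{p_{2i-1}-1}$, after expanding $(b,a) = (q_{2i}, p_{2i}) = u_{2i}\VEC{q_{2i-1}}{p_{2i-1}} + \VEC{q_{2i-2}}{p_{2i-2}}$ by \Eq{pattern:rec}.

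For the Freeman-word factors, I would combine \Eq{pattern:rec:even} and \Eq{pattern:rec:odd}: from $E(z_{2i}) = E(z_{2i-2}) E(z_{2i-1})^{u_{2i}}$ together with $E(z_{2i-1}) = E(z_{2i-2})^{u_{2i-1}} E(z_{2i-3})$, the full pattern $E(z_{2i})$ starts with $E(z_{2i-2})^{u_{2i-1}+1}$. A length check comparing $u_{2i-1}(p_{2i-2}+q_{2i-2})$ with $|\mathbf{U_1 L_1}|_1 = q_{2i-1} + p_{2i-1} = u_{2i-1}(q_{2i-2}+p_{2i-2}) + q_{2i-3} + p_{2i-3}$ then confirms that $E(z_{2i-2})^{u_{2i-1}}$ fits inside $[U_1 L_1]$ as a left factor. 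Symmetrically, $[U_1 U_2]$ ends with $E(z_{2i-1})^{u_{2i}}$ by \Eq{pattern:rec:even}, and the 1-norm of $\mathbf{L_1 U_2}$ equals $(u_{2i}-1)(q_{2i-1}+p_{2i-1}) + q_{2i-2}+p_{2i-2}$, which exceeds the length of $E(z_{2i-1})^{u_{2i}-1}$, so this power is a right factor of $[L_1 U_2]$. The only potentially delicate step is justifying that the candidate point is really the \emph{first} lower leaning point of the pattern rather than a later one; everything else reduces to routine manipulation of the continued-fraction recurrences.
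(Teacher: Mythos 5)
Your proposal is correct and follows essentially the same route as the paper's appendix proof: identify $L_1$ via the relation $a q_{2i-1}-b p_{2i-1}=-1$ from \Eq{pattern:rec:dif} (checking the remainder $a+b-1$), obtain $\mathbf{L_1U_2}$ by subtraction together with \Eq{pattern:rec}, and read off the word factors from the decomposition $E(z_{2i})=E(z_{2i-2})^{u_{2i-1}+1}E(z_{2i-3})E(z_{2i-1})^{u_{2i}-1}$ combined with a length comparison. The only (harmless) cosmetic difference is that you verify the candidate point directly rather than phrasing it through the B\'ezout coefficients, exactly as the paper itself does in the odd-complexity case.
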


    \section{Properties of maximal segments for convex curves}

    In this section, we study relations between maximal segments and
    digital edges of convex shape digitization.  The dilation of $S$
    by a real factor $r$ is denoted by $r \cdot S$. Let
    $\mathcal{D}_{m}$ be the digitization of step $1/m$, i.e. if $S$
    is a real shape: $\mathcal{D}_{m}(S)=(m \cdot S) \cap
    \mathbb{Z}^{2}$.  The length estimator based on the
    city-block distance is written as $\mathcal{L}^1$. 
    
    \subsection{Convex digital polygon (CDP)}
    \begin{definition}
      A \emph{convex digital polygon (CDP)} $\Gamma$ is a subset of
      the digital plane equal to the digitization of its convex hull,
      i.e. $\Gamma = \mathcal{D}_1(\Conv{\Gamma})$. Its {\em vertices}
      $(V_i)_{i=1..e}$ form the minimal subset for which
      $\Gamma=\GDigit{\Conv{V_1, \ldots,V_e}}$. The points on the
      boundary of $\Gamma$ form a 4-connected contour. The number of
      vertices (or edges) of $\Gamma$ is denoted by $n_{e}(\Gamma)$
      and its perimeter by $\mathrm{Per}(\Gamma)$.
    \end{definition}
    A $CDP$ is also called a lattice convex polygon \cite{Voss93}. An
    \emph{edge} is the Euclidean segment joining two consecutive
    vertices, and a \emph{digital edge} is the discrete segment
    joining two consecutive vertices. It is clear that we have as many
    \emph{edges} as \emph{digital edges} and as vertices.  From
    characterizations of discrete convexity \cite{KIM:PAMI:1982}, we
    clearly see that:
    
    \begin{proposition}
      \label{prop:edge-pattern}
      Each digital edge of a CDP is either a pattern or a succession of the
      same pattern whose slope is the one of the edge. In other words, both
      vertices are upper leaning points of the digital edge.
    \end{proposition}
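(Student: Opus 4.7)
The plan is to unpack the definition of a CDP and to show that a digital edge must lie on the standard line whose slope matches that of the Euclidean edge. Fix two consecutive vertices $V_i,V_{i+1}$ of $\Gamma$ and write $V_{i+1}-V_i = d\VEC{b}{a}$ with $\gcd(a,b)=1$ and $d\geq 1$, so that $a/b$ is the reduced slope of the edge (in the first octant, $0\le a\le b$). Set $\mu := aV_{i,x}-bV_{i,y}$, so the supporting Euclidean line of the edge has equation $ax-by=\mu$.

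First I would show that every lattice point $P=(x,y)$ of the digital edge satisfies $\mu \leq ax-by < \mu + a + b$, which by definition places the digital edge on the standard line $D(a,b,\mu)$ of slope $a/b$. The lower bound is immediate: since $[V_iV_{i+1}]$ is a face of $\Conv{\Gamma}$, the whole polygon $\Gamma \subseteq \GDigit{\Conv{\Gamma}}$ lies in the closed half-plane $\{ax-by \geq \mu\}$, and in particular so do the lattice points of the digital edge. The upper bound comes from the discrete convexity characterisation \cite{KIM:PAMI:1982}: the 4-connected contour of a CDP stays within $L^\infty$-distance one of $\partial(\Conv{\Gamma})$, so along the portion of the boundary between $V_i$ and $V_{i+1}$ the remainder cannot reach $\mu+a+b$.

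Once the DSS property is in hand, the conclusion follows purely from definitions. By construction both $V_i$ and $V_{i+1}$ have remainder exactly $\mu$, so both are upper leaning points of the DSS. Moreover, since $\gcd(a,b)=1$, any other lattice point of the Euclidean segment $[V_iV_{i+1}]$ must be of the form $V_i + k\VEC{b}{a}$ for $k=1,\ldots,d-1$, and each such point also has remainder $\mu$ and is therefore an upper leaning point. By the very definition of a pattern as the Freeman word between two consecutive upper leaning points, the digital edge is the concatenation of exactly $d$ identical patterns of characteristics $(a,b)$, which is the claim. The only non-routine step is the upper bound on the remainder, which is where discrete convexity is really used; everything else is straightforward bookkeeping with leaning points and the factorisation of the edge vector.
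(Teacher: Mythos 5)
The paper itself gives no proof of this proposition --- it is stated as an immediate consequence of ``characterizations of discrete convexity'' with a citation to Kim --- so your write-up is necessarily a different, more self-contained route, and its architecture is the right one: put the digital edge on the standard line $D(a,b,\mu)$ with $\mu = aV_{i,x}-bV_{i,y}$, note that the lattice points of the Euclidean edge are exactly $V_i+k\VEC{b}{a}$, $k=0,\dots,d$, all with remainder $\mu$, hence all upper leaning points, and conclude that the digital edge is $d$ copies of the pattern $(a,b)$. The lower bound on the remainder via the supporting half-plane is correct, and the endgame is fine; the only omission there is that you should remark that a monotone $4$-connected path joining two consecutive upper leaning points inside the band $\mu \le ax-by < \mu+a+b$ is \emph{unique}, which is what identifies the digital edge with the pattern word rather than merely placing it inside the same standard line.

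The step I would not accept as written is the upper bound on the remainder. You argue that the contour stays within $L^\infty$-distance one of $\partial(\Conv{\Gamma})$, ``so the remainder cannot reach $\mu+a+b$''. For a line of slope $a/b$ with $0\le a\le b$, a point at $L^\infty$-distance exactly $1$ from the line has remainder offset exactly $a+b$, so distance $\le 1$ only gives $ax-by\le \mu+a+b$, one unit short of what you need; moreover the distance-one property is essentially a restatement of the DSS property you are trying to establish, so invoking it is close to circular. A clean replacement: for every abscissa $x\in[V_{i,x},V_{i+1,x}]$ the supporting line of $\Conv{\Gamma}$ in that column is the edge line itself, so the extremal point of $\Gamma$ in column $x$ is $\bigl(x,\lfloor (ax-\mu)/b\rfloor\bigr)$, whose remainder lies in $[\mu,\mu+b-1]$; the at most one additional point per column inserted to make the contour $4$-connected differs from such an extremal point by $\VEC{1}{0}$ and therefore has remainder at most $\mu+a+b-1$. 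With that substitution (and the uniqueness remark above) your argument is complete and correct.
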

  
    We now recall one theorem concerning the asymptotic number of
    vertices of CDP that are digitization of continuous shapes. It
    comes from asymptotic properties of random polytopes.
    \begin{theorem}{(Adapted from Balog, B\'{a}r\'{a}ny \cite{Balog1991})}
      \label{thm:Balog:nbV1}
      If $S$ is a plane convex body with $\mathcal{C}^{3}$ boundary
      and positive curvature then $\mathcal{D}_{m}(S)$ is a CDP and
      \[
      c_{1}(S) m^{\frac{2}{3}} \leq n_{e}(\mathcal{D}_{m}(S))
      \leq c_{2}(S) m^{\frac{2}{3}}
      \]
      where the constants $c_{1}(S)$ and $c_{2}(S)$ depend on extremal
      bounds of the curvatures along $S$. Hence for a disc $c_{1}$ and
      $c_{2}$ are absolute constants.
    \end{theorem}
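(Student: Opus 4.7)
The plan is to verify the two separate assertions in turn: first that $\Digit{S}{m}$ is a CDP, then that its vertex count scales as $m^{2/3}$ with constants depending only on the curvature profile of $S$. The first assertion holds for any plane convex body independently of smoothness, while the second is essentially the Balog--B\'ar\'any bound for lattice points in a convex body, applied with an appropriate dilation.

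For the CDP property, I would argue directly from the definitions. Every point of $\Digit{S}{m} = (m \cdot S) \cap \Z^{2}$ is a lattice point of $\Conv{\Digit{S}{m}}$, hence belongs to $\GDigit{\Conv{\Digit{S}{m}}}$. Conversely, the convexity of $m \cdot S$ yields $\Conv{\Digit{S}{m}} \subseteq m \cdot S$, and intersecting with $\Z^{2}$ gives the reverse inclusion. Thus $\Digit{S}{m} = \GDigit{\Conv{\Digit{S}{m}}}$, and its vertices as a CDP coincide with those of $\Conv{(m \cdot S) \cap \Z^{2}}$; neither the $\mathcal{C}^{3}$ regularity nor the positive curvature play a role here.

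For the vertex count, I would appeal to the theorem of Balog and B\'ar\'any: for a smooth strictly convex plane body $K$ with curvatures bounded between $\kappa_{\min}(K)$ and $\kappa_{\max}(K)$, the number of vertices of $\Conv{K \cap \Z^{2}}$ is pinched between two multiples of $A(K)^{1/3}$, the multiplicative constants depending only on the extremal curvatures of $\partial K$. Applying this to $K = m \cdot S$ gives $A(K)^{1/3} = m^{2/3} A(S)^{1/3}$ and hence the claimed scaling in $m^{2/3}$. This is where the $\mathcal{C}^{3}$ boundary and the strict positivity of the curvature are used, to ensure both the upper and the matching lower bound from Balog--B\'ar\'any apply.

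The main subtlety, which I expect to be the key technical point, is tracking how the Balog--B\'ar\'any constants transform under the dilation $K = m \cdot S$. The curvatures of $\partial(m \cdot S)$ equal $1/m$ times those of $\partial S$, so one must verify that the contribution of these extremal curvatures to the constants combines with $A(K)^{1/3}$ into quantities $c_{1}(S), c_{2}(S)$ that are genuinely $m$-independent. In the disc case the extremal curvatures coincide, making the ratio $\kappa_{\max}/\kappa_{\min}$ identically $1$ regardless of $m$, which is consistent with the final sentence of the statement asserting that $c_{1}$ and $c_{2}$ then reduce to absolute constants.
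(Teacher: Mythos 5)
The paper does not actually prove this statement: it is imported wholesale as an ``adapted'' citation of Balog and B\'ar\'any, so the only content to check in your proposal is the adaptation itself, i.e.\ the CDP claim and the bookkeeping that turns the cited bound into $m$-independent constants $c_1(S), c_2(S)$. Your CDP argument is correct and complete: $\mathrm{conv}\bigl((m\cdot S)\cap\mathbb{Z}^2\bigr)\subseteq m\cdot S$ by convexity gives one inclusion and the trivial inclusion gives the other, so $\mathcal{D}_m(S)=\mathcal{D}_1\bigl(\mathrm{conv}(\mathcal{D}_m(S))\bigr)$; as you note, smoothness plays no role there (only, implicitly, that $m$ is large enough for the digitization to be a nondegenerate 4-connected contour).

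The weak point is exactly the one you flag and then leave open. You quote the external theorem in a ``single body'' form -- number of vertices of $\mathrm{conv}(K\cap\mathbb{Z}^2)$ pinched between multiples of $A(K)^{1/3}$ with constants depending only on the extremal curvatures of $\partial K$ -- and then apply it to $K=m\cdot S$, whose curvatures are $\kappa/m$ and hence tend to $0$. In that formulation the constants a priori degenerate with $m$, and no scale-covariant statement of this kind can hold uniformly (a small dilate may contain no lattice points at all), so ``one must verify'' is not a step you can defer: it is the entire content of the adaptation. The clean fix is to invoke Balog--B\'ar\'any in the form in which it is proved, namely for dilates $r\cdot K$ of a \emph{fixed} convex body $K$ with smooth boundary and strictly positive curvature, where the two constants depend on $K$ only and the bound is $\Theta(r^{2/3})$; with $K=S$ and $r=m$ this is verbatim the paper's statement, and for the disc the constants are absolute. (Alternatively, the upper bound alone follows with an absolute constant from Andrews' theorem, since $\mathrm{conv}(\mathcal{D}_m(S))$ is a convex lattice polygon of area $O(m^2)$; it is only the lower bound that needs the curvature hypotheses and the Balog--B\'ar\'any machinery.) With that reformulation your sketch matches the paper's (purely citational) treatment; as written, the dilation-tracking gap remains.
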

    
    \subsection{Links between maximal segments and edges of CDP }    
    Maximal segments are DSS: between any two upper (resp. lower)
    leaning points lays at least a lower (resp. upper) leaning
    point. The slope of a maximal segment is then defined by two consecutive upper and/or
    lower leaning points. Digital edges are patterns
    and their vertices are upper leaning points (from
    Prop. \ref{prop:edge-pattern}). Thus, vertices may be upper
    leaning points but never lower leaning points of maximal
    segments. We have 
    \begin{lemma}
      \label{lem:msInde}
      A maximal segment cannot be strictly contained into a digital edge.
    \end{lemma}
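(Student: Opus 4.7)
My plan is to argue by contradiction, exploiting the fact that a digital edge is itself a DSS so any strict superset of a maximal segment inside it would directly extend the maximal segment on one side.

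First I would set up notation: let $M = \PTS{i}{j}$ be a maximal segment and suppose for contradiction that $M$ is strictly contained in a digital edge $E = \PTS{p}{q}$ of the CDP, with $p \le i$, $q \ge j$, and $(p,q) \ne (i,j)$. By \RefProposition{prop:edge-pattern}, $E$ is a pattern or a concatenation of identical patterns of some characteristics $(a,b)$; in particular, $E$ is a DSS of slope $a/b$ (it lies in the standard line $D(a,b,\mu)$ determined by its vertices, which are upper leaning points). Any set of consecutive points of a DSS is again a DSS, so every sub-interval $\PTS{i'}{j'}$ with $p \le i' \le j' \le q$ satisfies $\SPRED{i'}{j'}$.

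Next I would use the strict inclusion. Since $M \subsetneq E$, either $p < i$ or $q > j$. Without loss of generality assume $q > j$ (the case $p < i$ is symmetric). Then $\PTS{i}{j+1}$ is a sub-interval of $E$, hence $\SPRED{i}{j+1}$ holds. But by characterization (1) of \RefDefinition{def:maximal-segment}, the maximality of $M$ forces $\neg \SPRED{i}{j+1}$, a contradiction. In the symmetric case $p < i$ we would get $\SPRED{i-1}{j}$ contradicting $\neg \SPRED{i-1}{j}$.

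I do not anticipate a real obstacle: the only subtlety is to make sure I am using the right notion of "contained"—namely, containment as sets of consecutive indexed points on the contour $\DC$—so that extending by one index on either side really does produce a longer interval still inside $E$. The proof therefore reduces to one invocation of \RefProposition{prop:edge-pattern} (to see $E$ as a DSS) and one application of the definition of maximality.
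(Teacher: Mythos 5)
Your proof is correct: a digital edge is a DSS by Proposition~\ref{prop:edge-pattern}, every interval of consecutive points inside a DSS is again a DSS, and strict containment would therefore let the maximal segment be extended by one point within the edge, contradicting characterization (1) of Definition~\ref{def:maximal-segment}. The paper states the lemma as an immediate consequence of Proposition~\ref{prop:edge-pattern} and the surrounding discussion without writing out a proof, and your argument is exactly the explicit version of that implicit reasoning, so there is nothing to add.
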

    We now introduce a special class of digital edge.
    \begin{definition}
      \label{def:supporting-edge}
      We call \emph{supporting edge}, a digital edge whose two
      vertices define leftmost and rightmost upper leaning points of a
      maximal segment. 
    \end{definition}
    Relations between maximal DSS and digital edges are given by the
    following lemmas:
    \begin{lemma}
      \label{lem:msSE}
      A supporting edge defines only one maximal segment: it is the
      only one containing the edge and it has the same slope.  If a
      maximal segment contains two or more upper leaning points then
      there is a \emph{supporting edge} linking its leftmost and
      rightmost upper leaning points with the same slope.  If a
      maximal segment contains three or more lower leaning points then
      it has a \emph{supporting edge}.
    \end{lemma}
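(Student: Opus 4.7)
The plan is to handle the three assertions in order, with the second being the technical core and the third following by a quick reduction.

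For the first assertion, let $E$ be a supporting edge with vertices $V_1$ and $V_2$, which by definition are the leftmost and rightmost upper leaning points of some maximal segment $M$ of slope $a/b$. By \RefProposition{prop:edge-pattern} the edge $E$ is a pattern (or a repetition of the pattern) of slope $a/b$, and since $V_1,V_2\in M$ and $M$ is a connected sub-contour, $E\subseteq M$. For uniqueness, suppose $M'\supseteq E$ is another maximal segment. Any DSS containing the pattern $E$ must have slope $a/b$ (the upper leaning points $V_1,V_2$ of $E$ remain upper leaning points of the containing DSS) and must lie in the same standard line $(a,b,\mu)$ (the shift is fixed by $V_1$'s remainder). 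Hence $M$ and $M'$ are both maximal DSSs in $(a,b,\mu)$ extending $E$ along the contour, and the uniqueness of this maximal extension forces $M=M'$.

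For the second assertion, let $M$ have slope $a/b$ with leftmost and rightmost upper leaning points $U$ and $U'$, so $\overrightarrow{UU'}=k(b,a)$ for some integer $k\ge 1$ and the portion of $M$ between $U$ and $U'$ is exactly $k$ consecutive patterns of $(a,b)$. I split the proof into two sub-claims: (a) $U$ and $U'$ are vertices of the CDP; (b) the contour from $U$ to $U'$ is a single digital edge of slope $a/b$. For (a), I argue by contradiction: if $U$ were interior to some digital edge $E_j$, then by \RefProposition{prop:edge-pattern} the Freeman moves of the contour on both sides of $U$ would continue the same pattern of $E_j$'s slope. Using the explicit local description of the moves around an upper leaning point provided by \RefProposition{prop:odd-pattern} and \RefProposition{prop:even-pattern}, this would produce another upper leaning point of $M$ strictly to the left of $U$, contradicting $U$ being leftmost; the same argument handles $U'$. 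For (b), the $k$ copies of the pattern $(a,b)$ realized on the contour between the vertices $U$ and $U'$ are exactly the form a digital edge of slope $a/b$ is allowed to take by \RefProposition{prop:edge-pattern}; therefore this piece of contour is a single digital edge, and it is the required supporting edge.

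For the third assertion, I use the pattern structure: between two consecutive lower leaning points of a DSS of slope $(a,b)$ there lies a reversed pattern whose interior contains exactly one upper leaning point (the junction with the neighbouring pattern of $(a,b)$). Hence three consecutive lower leaning points $L_1,L_2,L_3$ force at least two upper leaning points of $M$---one between $L_1$ and $L_2$, one between $L_2$ and $L_3$---and the second assertion then provides the supporting edge.

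The main obstacle is sub-claim (a) of the second assertion. While intuitively $U$ must be a vertex (the slope of $M$ differs from the slope of the edge on either side whenever $M$ crosses between edges), making this rigorous means matching the Freeman-move description of the neighbourhood of an upper leaning point of a DSS of slope $a/b$ (given by \RefProposition{prop:odd-pattern} and \RefProposition{prop:even-pattern}) against the possible sub-patterns appearing inside a digital edge of the CDP (given by \RefProposition{prop:edge-pattern}), so as to rule out the configuration where $U$ lies strictly inside a single edge.
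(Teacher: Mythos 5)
The paper states this lemma without any proof, so there is no argument of the authors' to compare yours against; judging your outline on its own, the architecture is sensible --- in particular your reduction of the third assertion to the second, via the fact that an upper leaning point lies strictly between any two consecutive lower leaning points, is exactly the observation the paper makes in the paragraph preceding the lemma. However, two load-bearing steps are genuinely incomplete. In the uniqueness argument for the first assertion, the parenthetical claim that $V_1$ and $V_2$ ``remain upper leaning points of the containing DSS'' is not a general fact about DSS containment: the pattern $(1,1)$, i.e.\ the word $01$ running from $(1,0)$ to $(2,1)$, sits inside the DSS $0010$, whose minimal slope is not $1/1$ and in which $(1,0)$ is no longer an upper leaning point. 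What rescues the claim in the present setting is that $V_1,V_2$ are extreme points of $\Conv{\Gamma}$ and that any maximal segment containing $E$ lies on the convex contour, so the supporting line carrying its upper leaning points must touch the hull of that arc along a face containing $V_1$ and $V_2$; you never invoke convexity at this point, so as written the step is unjustified.

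The second gap is sub-claim (a) of your second assertion --- that the leftmost and rightmost upper leaning points of $M$ are CDP vertices --- which is the technical heart of the whole lemma and which you explicitly leave as ``the main obstacle''. Note that the Freeman-move matching you propose must exclude not only the case where $U$ lies strictly inside a digital edge of the \emph{same} slope $a/b$ (there the contradiction is easy: the pattern continues one full period to the left of $U$ inside the standard line of $M$, so $M$ was not maximal) but also the case of an edge of a \emph{different} slope, which is where the real work lies; a convex-geometry argument (the line $ax-by=\mu$ through the upper leaning points supports $\Conv{M}$, its face must be one of the segments of the outer hull of the arc $M$, and maximality forces that face to be a full hull edge with $U,U'$ as endpoints) is likely more tractable than word combinatorics. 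Finally, sub-claim (b) also has a small hole: even once $U$ and $U'$ are known to be vertices joined along the contour by $k$ patterns of slope $a/b$, you must still rule out a third CDP vertex strictly between them, since otherwise the contour from $U$ to $U'$ decomposes into several digital edges rather than one supporting edge. This does follow (every intermediate lattice point on the chord $[UU']$ is a convex combination of $U$ and $U'$, and every other intermediate contour point lies strictly below that chord, so none can be an extreme point of $\Conv{\Gamma}$), but it needs to be said.
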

    \begin{lemma}
      \label{lem:ms2llp}
      If a maximal segment is defined by only two consecutive lower
      leaning points then it has one upper leaning point which is some
      vertex of the CDP by convexity.
    \end{lemma}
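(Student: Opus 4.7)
The plan is to first exhibit an upper leaning point $U$ between the two consecutive lower leaning points $L_1$ and $L_2$, and then prove $U$ must be a vertex of the CDP.

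For the existence of $U$, the portion of $M$ from $L_1$ to $L_2$ is a reversed pattern of slope $a/b$, which by the symmetric counterpart of Propositions~\ref{prop:odd-pattern} and~\ref{prop:even-pattern} contains exactly one upper leaning point $U$ strictly in its interior. A short extension of $M$ beyond $L_1$ or $L_2$ cannot introduce a further upper leaning point without forcing a third lower leaning point (by the alternation of leaning points at period $(b,a)$ along the DSS), which is ruled out by the hypothesis.

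To prove $U$ is a vertex, I would argue by contradiction: assume $U$ lies strictly in the interior of some digital edge $E$ of $\Gamma$ with endpoints $V_k, V_{k+1}$. By Proposition~\ref{prop:edge-pattern}, $E$ is a pattern (or a succession of patterns) of some slope $s$, and $V_k, V_{k+1}$ are upper leaning points of $E$. If $s = a/b$, then $U$ must lie at a junction of $E$'s patterns (only such points in $E$ carry $M$'s remainder value $\mu$), so the upper leaning lines of $E$ and of $M$ both pass through $U$ with slope $a/b$ and therefore coincide with $\ell_1 : ax - by = \mu$. Then $E \cup M$ is still a DSS of slope $a/b$ on the 4-connected contour, and the maximality of $M$ gives $E \subseteq M$; the vertices $V_k, V_{k+1}$ become upper leaning points of $M$ distinct from $U$, and placing them together with $L_1, L_2$ in an alternating sequence forces a third lower leaning point between $V_k$ and $V_{k+1}$ (at the predicted displacement from Propositions~\ref{prop:odd-pattern}--\ref{prop:even-pattern}), contradicting the hypothesis. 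If $s \neq a/b$, the Euclidean edge line of $E$ crosses $\ell_1$ transversally at $U$; extending $M$ along $E$ on either side, the periodic structure of $E$ combined with the maximality of $M$ is expected to force $M$ to reach either a second upper leaning point (a lattice point of $\Gamma$ on $\ell_1$ at displacement $(b,a)$ from $U$, typically one of $V_k, V_{k+1}$) or a third lower leaning point before $M$ can stop inside $E$, again contradicting the hypothesis.

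The main obstacle is the rigorous treatment of the case $s \neq a/b$, as it requires a careful combinatorial analysis of how $M$ of slope $a/b$ interacts with $E$'s pattern of slope $s$ and where the extensions are forced to stop. The cleanest way I see to finish is to appeal directly to convexity: along the sub-arc of the boundary of $\Gamma$ from $L_1$ to $L_2$, the linear functional $f(P) = aP_x - bP_y$ attains a strict local minimum at the unique point $U$ (since $U$ is the only upper leaning point in this reversed pattern, while $L_1, L_2$ lie on the parallel lower leaning line $ax - by = \mu + a + b - 1$); hence the line $\ell_1$ locally supports $\Conv{\Gamma}$ at $U$ with $U$ as an isolated contact point, which makes $U$ an extreme point of $\Conv{\Gamma}$, i.e.\ a vertex of the CDP.
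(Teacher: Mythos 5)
There is a genuine gap, and it sits exactly where you place your confidence: the final ``convexity'' paragraph, offered as the clean way to finish, is not a valid argument because it never uses the maximality of the segment $M$ --- and the statement is false without maximality. Concretely, take a digital edge $E=[V_kV_{k+1}]$ of $\Gamma$ made of $f\ge 2$ patterns of slope $a/b$ and consider the (non-maximal) DSS consisting of the contour points between its first two lower leaning points $L_1,L_2$. This DSS has exactly two consecutive lower leaning points and a unique upper leaning point $U=V_k+(b,a)$, and along the sub-arc from $L_1$ to $L_2$ the functional $f(P)=aP_x-bP_y$ does attain a strict minimum at $U$; yet $U$ is an interior upper leaning point of $E$ (for $f=2$ it is the midpoint of $V_k$ and $V_{k+1}$), hence not a vertex of \Conv{\Gamma}. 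So the inference ``strict local minimum of $f$ along the $L_1L_2$ sub-arc $\Rightarrow$ $\ell_1$ locally supports \Conv{\Gamma} at $U$ with isolated contact'' is invalid. The underlying reason is that for a convex set, local support at $U$ is \emph{equivalent} to global support, i.e.\ to $aP_x-bP_y\ge\mu$ for \emph{every} $P\in\Gamma$ (if some $P$ violated it, points of the segment $[UP]\subset\Conv{\Gamma}$ arbitrarily close to $U$ would too); your sub-arc analysis says nothing about points of $\Gamma$ outside the stretch $[L_1L_2]$, and such a global bound is not automatic (the contour point just beyond an end of $M$ is strongly exterior and may have remainder $\le\mu-2$). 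Any correct proof must therefore exploit that $M$ cannot be extended, which is precisely the ingredient your last paragraph drops.

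The earlier contradiction argument is the right kind of reasoning, but it does not close the lemma either: you explicitly leave the case $s\neq a/b$ unresolved (``is expected to force \dots''), and that case is the real content --- it is where the pattern/slope interplay of \RefLemma{lem:msSE}, \RefProposition{prop:edge-pattern} and the appendix-style factor arguments is needed. Even in the case $s=a/b$, the claim that $U$ must sit at a pattern junction of $E$ tacitly assumes that $E$'s intercept equals $\mu$, which itself requires an argument (e.g.\ that the common stretch of $E$ and $M$ contains both a point of remainder $\mu$ and one of remainder $\mu+a+b-1$). Finally, the alternation claim in your existence step is wrong as stated: along a standard line the leaning points alternate $U,L,U,L,\dots$, so extending $M$ past $L_2$ meets the next \emph{upper} leaning point before any third lower leaning point; a second upper leaning point is excluded not by your alternation argument but by the hypothesis itself (otherwise $M$ has a supporting edge and falls under \RefLemma{lem:msSE}). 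For comparison, the paper states this lemma without a detailed proof, as a consequence of convexity and of \RefProposition{prop:edge-pattern}; your attempt is more explicit, but as written it neither completes the case analysis nor provides a convexity argument that actually uses maximality, so the key step remains unproven.
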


    Lengths of maximal segments and digital edges are
    tightly intertwined, as shown by the two next propositions.

    \begin{proposition} 
      \label{prop:lMS-SE}
      Let $[V_{k}V_{k+1}]$ be a \emph{supporting edge} of slope
       $\frac{a}{b}$ made of $f$ patterns $(a,b)$ and let $MS$ be the
       \emph{maximal segment} associated with it
       (\RefLemma{lem:msSE}). Their lengths are linked by the
       inequalities:
      \[
        {\mathcal{L}^{1}}(V_{k}V_{k+1}) \leq {\mathcal{L}^{1}}(MS)
        \leq \frac{f+2}{f} {\mathcal{L}^{1}}(V_{k}V_{k+1}) - 2 
        \quad\mbox{and} 
        \quad 
        \frac{1}{3} \mathcal{L}^{1}(MS) \leq
        \mathcal{L}^{1}(V_{k}V_{k+1}) \leq \mathcal{L}^{1}(MS) \leq 3
        \mathcal{L}^{1}(V_{k}V_{k+1})
        \]
    \end{proposition}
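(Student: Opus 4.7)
The plan is to express both lengths in terms of Freeman moves and then to bound the possible extensions of $MS$ beyond the supporting edge. Since $[V_k V_{k+1}]$ consists of $f$ concatenated patterns $(a,b)$, each carrying the displacement $(b,a)$ and hence $a+b$ Freeman moves, its $\mathcal{L}^1$ length equals $f(a+b)$. By \RefLemma{lem:msSE}, the associated maximal segment $MS$ has slope $a/b$, contains $[V_k V_{k+1}]$, and admits $V_k$ and $V_{k+1}$ as its leftmost and rightmost upper leaning points respectively. The inclusion alone already gives $\mathcal{L}^1(V_k V_{k+1}) \le \mathcal{L}^1(MS)$.

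Next I decompose $MS$ as a left extension of $\mathcal{L}^1$-length $\ell_L$, followed by the supporting edge, followed by a right extension of $\mathcal{L}^1$-length $\ell_R$. Because $V_k$ (resp.\ $V_{k+1}$) is the leftmost (resp.\ rightmost) upper leaning point of $MS$, neither extension can reach another upper leaning point of the underlying standard line $(a,b,\mu)$. Consecutive upper leaning points on this line differ by the vector $(b,a)$ and are joined by a full pattern consisting of $a+b$ Freeman moves, so $\ell_L \le (a+b)-1$ and $\ell_R \le (a+b)-1$. Summing gives
\[
\mathcal{L}^1(MS) \le f(a+b) + 2\bigl((a+b)-1\bigr) = (f+2)(a+b) - 2 = \frac{f+2}{f}\,\mathcal{L}^1(V_k V_{k+1}) - 2,
\]
which, combined with the inclusion bound, yields the first displayed inequality.

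The second chain is then an immediate corollary. Inclusion provides the middle inequality $\mathcal{L}^1(V_k V_{k+1}) \le \mathcal{L}^1(MS)$. Since $f \ge 1$, the ratio $(f+2)/f$ attains its maximum $3$ at $f=1$, so
\[
\mathcal{L}^1(MS) \le \frac{f+2}{f}\,\mathcal{L}^1(V_k V_{k+1}) - 2 \le 3\,\mathcal{L}^1(V_k V_{k+1}),
\]
which is equivalent to $\frac{1}{3}\mathcal{L}^1(MS) \le \mathcal{L}^1(V_k V_{k+1})$.

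The main obstacle I anticipate is the bound $\ell_L, \ell_R \le (a+b)-1$: one has to argue rigorously that $a+b$ additional Freeman moves of extension would force another upper leaning point into $MS$, contradicting the supporting-edge hypothesis. This rests on the arithmetic structure of standard lines encoded by the pattern decomposition of \RefProposition{prop:odd-pattern} and \RefProposition{prop:even-pattern}, which locate the leaning points inside a pattern, together with the fact that maximality of $MS$ means it follows the standard line as far as possible on both sides.
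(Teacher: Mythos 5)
Your proof is correct and takes essentially the same route as the paper: both compute $\mathcal{L}^{1}(V_{k}V_{k+1}) = f(a+b)$, bound each extension of $MS$ beyond the supporting edge by $a+b-1$ because the translated upper leaning points $V_{k}-(b,a)$ and $V_{k+1}+(b,a)$ cannot be reached, and then obtain the factor $3$ from the worst case $f=1$ of $(f+2)/f$. The only minor difference is the one-line justification of that bound --- the paper argues those points cannot belong to the CDP, while you argue they cannot belong to $MS$ since $V_{k}$ and $V_{k+1}$ are its extremal upper leaning points by the supporting-edge definition --- and both justifications are sound.
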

    \begin{proof}
      Vertices $V_{k}$ and $V_{k+1}$ are {\em leftmost} and {\em
      rightmost} upper leaning points of $MS$. The points
      $V_{k}-(b,a)$, $V_{k+1}+(b,a)$ while clearly upper leaning points of the
      standard line going through $[V_{k} V_{k+1}]$ cannot belong to
      the CDP. Hence $MS$ cannot extend further of its supporting
      edge of more than $|a| + |b| - 1$ points on both
      sides. Consequently $ {\mathcal{L}^{1}}(MS) \leq
      {\mathcal{L}^{1}}(V_{k}V_{k+1}) +2(|a| + |b| - 1)$. Using
      ${\mathcal{L}^{1}}(V_{k}V_{k+1}) = f(|a| + |b|)$ brings:
      ${\mathcal{L}^{1}}(V_{k}V_{k+1}) \leq {\mathcal{L}^{1}}(MS))
      \leq \frac{f+2}{f} {\mathcal{L}^{1}}(V_{k}V_{k+1}) - 2$. Worst
      cases bring $\mathcal{L}^{1}(V_{k}V_{k+1}) \leq
      \mathcal{L}^{1}(MS) \leq 3 \mathcal{L}^{1}(V_{k}V_{k+1})$ \qed
    \end{proof}
    \begin{proposition}
      \label{prp:lMS-LUL}
      Let $MS_{k'}$ be a maximal segment in the configuration of
      \RefLemma{lem:ms2llp}, and so let $V_k$ be the vertex that is
      its upper leaning point. The length of the maximal segment is
      upper bounded by:
      \[
      \mathcal{L}^{1}(MS_{k'}) \leq 4 \left(
      {\mathcal{L}^{1}}(V_{k-1}V_{k}) +
      {\mathcal{L}^{1}}(V_{k}V_{k+1}) \right)
      \]
    \end{proposition}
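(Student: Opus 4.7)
The plan is to bound $\mathcal{L}^{1}(MS_{k'})$ in two stages: first by $2(a+b)$, where $(a,b)$ denotes the slope of $MS_{k'}$, and then to bound $a+b$ by twice the sum of the incident edge lengths. Let $L_{1}'$ and $L_{2}'$ be the two lower leaning points of $MS_{k'}$, lying on either side of the unique upper leaning point $V_{k}$ along the DSS. Because $V_{k}$ is the \emph{only} upper leaning point of $MS_{k'}$, the next upper leaning points of the underlying standard line, located at $V_{k}\pm(b,a)$, do not belong to $MS_{k'}$. Each of them sits at city-block distance $a+b$ from $V_{k}$, so $MS_{k'}$ extends strictly less than $a+b$ Freeman moves on each side of $V_{k}$, giving $\mathcal{L}^{1}(MS_{k'}) < 2(a+b)$.

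Next I bound $a+b$ in terms of the two incident edges. Since $L_{1}'$ and $L_{2}'$ are consecutive lower leaning points with $V_{k}$ between them on the standard line, one has the identity $\mathcal{L}^{1}(V_{k}L_{1}')+\mathcal{L}^{1}(V_{k}L_{2}')=a+b$. The main step is to establish $\mathcal{L}^{1}(V_{k}L_{1}')\le 2\,\mathcal{L}^{1}(V_{k-1}V_{k})$, and symmetrically $\mathcal{L}^{1}(V_{k}L_{2}')\le 2\,\mathcal{L}^{1}(V_{k}V_{k+1})$. If $L_{1}'$ lies inside $[V_{k-1}V_{k}]$, this is immediate since $\mathcal{L}^{1}(V_{k}L_{1}')$ is then measured along a sub-path of that edge. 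Otherwise $L_{1}'$ is past $V_{k-1}$, and the whole edge $V_{k-1}V_{k}$ sits inside $MS_{k'}$ as a sub-DSS of slope $(a,b)$; I would argue that $\mathcal{L}^{1}(V_{k-1}L_{1}')\le\mathcal{L}^{1}(V_{k-1}V_{k})$ via a remainder analysis: at $V_{k-1}$ the remainder with respect to $MS_{k'}$ equals $\mu+f_{1}\delta_{1}$ with $\delta_{1}=a_{1}b-ab_{1}\ge 1$ (the slope gap) and $f_{1}$ the pattern count of the edge, while at $L_{1}'$ it equals $\mu+a+b-1$. Strict convexity of the CDP forces the preceding edge to have a strictly steeper slope, hence a larger per-step remainder increment, so the lower-leaning threshold is reached within at most $f_{1}(a_{1}+b_{1})=\mathcal{L}^{1}(V_{k-1}V_{k})$ backward Freeman steps.

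Assembling both inequalities one obtains
\[
\mathcal{L}^{1}(MS_{k'}) < 2(a+b) = 2\bigl(\mathcal{L}^{1}(V_{k}L_{1}')+\mathcal{L}^{1}(V_{k}L_{2}')\bigr) \le 4\bigl(\mathcal{L}^{1}(V_{k-1}V_{k})+\mathcal{L}^{1}(V_{k}V_{k+1})\bigr),
\]
which is the desired inequality. The main obstacle is the sub-case in which a lower leaning point lies past the adjacent CDP vertex: one has to use the pattern arithmetic of Propositions~\ref{prop:odd-pattern} and \ref{prop:even-pattern} together with convexity to show that the backward (or forward) excursion of $MS_{k'}$ beyond $V_{k-1}$ (resp.\ $V_{k+1}$) is bounded by the length of the adjacent edge itself. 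The loss of a factor $2$ in that sub-case, as opposed to the trivial bound that applies when the lower leaning point lies inside the adjacent edge, is precisely what produces the constant $4$ in the statement rather than a tighter one.
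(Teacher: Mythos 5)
Your skeleton matches the paper's: you bound $\mathcal{L}^{1}(MS_{k'})$ by $2(a+b)$ via the missing upper leaning points at $V_k\pm(b,a)$, then split $a+b=\mathcal{L}^{1}(V_kL_1')+\mathcal{L}^{1}(V_kL_2')$ and try to bound each half-period by twice the adjacent edge. The first stage and the easy sub-case ($L_1'$ inside $[V_{k-1}V_k]$) are fine. The gap is exactly in the sub-case you flag as the main obstacle. Your claim that, once past $V_{k-1}$, the lower-leaning threshold is reached within $\mathcal{L}^{1}(V_{k-1}V_{k})$ further backward steps does not follow from ``steeper slope $\Rightarrow$ larger per-step remainder increment.'' The remainder gap still to be closed at $V_{k-1}$ is $a+b-1-f_1\delta_1$, which can greatly exceed $f_1\delta_1$, the total increment accumulated over the whole edge. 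Concretely, for $a/b=[0,2,2,2]=5/12$ the tight edge is $E([0,2])^2=(001)^2$ of length $6$, with $f_1\delta_1=4$ while the gap is $12=3f_1\delta_1$; a preceding edge of slope $3/5$ has per-step rate $11/8$, so the rate count only guarantees the threshold within about $12\cdot 8/11\approx 8.7>6$ backward steps. Chasing average rates therefore yields at best a constant $8$ in the final statement, not $4$. Worse, the average-rate picture is the wrong mechanism: going backward, each Freeman move changes the remainder by $+b$ or $-a$, so it is not monotone, and it is precisely these within-pattern jumps that locate $L_1'$ (in the example above $L_1'$ is a single step past $V_{k-1}$ because every pattern of positive slope ends with a vertical move, contributing $+b=+12$ at once).

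What closes this sub-case in the paper, and what your sketch is missing, is the exact pattern arithmetic. \RefProposition{prop:odd-pattern} (resp.\ \RefProposition{prop:even-pattern}) locates $L_1'$ via the B\'ezout coefficients and exhibits $E(z_{2i-1})^{u_{2i}}$ as a right factor of $[L_1'V_k]$; one then argues that this factor must be entirely contained in the digital edge $[V_{k-1}V_k]$, since otherwise $V_{k-1}$ would be an interior point of a longer run of the single pattern $E(z_{2i-1})$ and hence not a vertex. With \Eq{pattern:rec} this gives $\mathcal{L}^{1}(L_1'V_k)=q_{2i}+p_{2i}\le(u_{2i}+1)(q_{2i-1}+p_{2i-1})\le\tfrac{u_{2i}+1}{u_{2i}}\,\mathcal{L}^{1}(V_{k-1}V_{k})\le 2\,\mathcal{L}^{1}(V_{k-1}V_{k})$, which is the inequality you need. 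Some such exact localisation of $L_1'$ relative to $V_{k-1}$ is unavoidable; the convexity-plus-rate heuristic alone does not deliver the factor $2$, and as written your proof does not go through.
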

    \begin{proof}
      We call $L_{1}$, $L_{2}$ the leftmost and rightmost lower
      leaning points and $U_{2}\equiv V_{k}$ the upper leaning point
      (see \RefFig{fig:aop-pattern}).  Suppose that $MS_{k'}$ has a
      slope with an odd complexity (say $2i+1$).
      \RefProposition{prop:odd-pattern} implies
      ${\mathcal{L}^{1}}(\mathbf{L_{1}U_{2}}) = q_{2i} +
      p_{2i}$. There is clearly a right part of $[L_{1}U_{2}]$
      (i.e. $[L_{1}V_{k}]$) that is contained in $[V_{k-1}V_{k}]$ and
      touches $V_k$. The pattern $E(z_{2i-1})^{u_{2i}}$ is a right
      factor of $[L_{1}U_{2}]$ (\RefProposition{prop:odd-pattern}
      again). It is indeed a right factor of $[V_{k-1}V_{k}]$ too,
      since it cannot extends further than $V_{k-1}$ to the left
      without defining a longer digital edge. We get $[V_{k-1}V_{k}]
      \supseteq E(z_{2i-1})^{u_{2i}}$ and immediately 
      ${\mathcal{L}^{1}}(V_{k-1}V_{k}) \geq
      u_{2i}\mathcal{L}^{1}(E(z_{2i-1})) =
      u_{2i}(q_{2i-1}+p_{2i-1})$.

    
      From \Eq{pattern:rec}, we have: $q_{2i} + p_{2i} =
      u_{2i}(q_{2i-1}+p_{2i-1}) + q_{2i-2}+p_{2i-2}$ and $q_{2i-2} +
      p_{2i-2} \leq q_{2i-1} + p_{2i-1}$.  We obtain immediately
      ${\mathcal{L}^{1}}(\mathbf{L_{1}U_{2}}) = q_{2i} + p_{2i} \leq
      (u_{2i}+1)(q_{2i-1}+p_{2i-1})$. By comparing this length to the
      length of the digital edge $[V_{k-1}V_{k}]$, we get
      ${\mathcal{L}^{1}}(\mathbf{L_{1}U_{2}}) \leq
      \frac{u_{2i}+1}{u_{2i}} {\mathcal{L}^{1}}(V_{k-1}V_{k})$.

      \RefProposition{prop:odd-pattern} and similar arguments on
      $[V_{k}V_{k+1}]$ brings ${\mathcal{L}^{1}}(\mathbf{U_{2} L_{2}})
      \leq \frac{u_{2i+1}}{u_{2i+1}-1}
      {\mathcal{L}^{1}}(V_{k-1}V_{k})$.
      Worst cases are then ${\mathcal{L}^{1}}(\mathbf{L_{1}U_{2}})
      \leq 2 {\mathcal{L}^{1}}(V_{k-1}V_{k})$ and
      ${\mathcal{L}^{1}}(\mathbf{U_{2}L_{2}}) \leq 2
      {\mathcal{L}^{1}}(V_{k}V_{k+1})$. The case where $MS_{k'}$ has
      a slope with an even complexity (say $2i$) uses
      Prop. \ref{prop:even-pattern} and is treated similarly.
    
      Since $MS$ has only one upper leaning point, it
      cannot be extended further than
      ${\mathcal{L}^{1}}(\mathbf{U_{2}L_{2}})$ on the left and
      ${\mathcal{L}^{1}}(\mathbf{L_{1}U_{2}})$ on the right
      (\RefLemma{lem:msSE} ). We thus get
      ${\mathcal{L}^{1}}(MS_{k'}) \leq
      4({\mathcal{L}^{1}}(V_{k-1}V_{k})+{\mathcal{L}^{1}}(V_{k}V_{k+1})).$\qed
    \end{proof}
    
    A proof of the following theorem based on pattern analysis is
    given in Appendix B for limited space reasons. A similar result
    related to linear integer programming is in
    \cite{Shevchenko81}. It may also be obtained by viewing standard
    lines as intersection of two knapsack polytopes \cite{Hayes83}.
    
    \begin{theorem} \label{thm:msMaxedge}
      Let $E$ be a supporting edge whose slope has a complexity $n$,
      $n \ge 2$, then the maximal segment containing $E$ includes at
      most $n$ other edges on each side of $E$.
    \end{theorem}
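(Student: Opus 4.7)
By symmetry, I prove the bound on the right side of $E$ (the left side is analogous). Let $V_{k+1}$ be the right endpoint of $E$; it is an upper leaning point of the maximal segment $MS$. By \RefProposition{prop:lMS-SE}, $MS$ extends at most $a+b-1$ Freeman moves beyond $V_{k+1}$, so the Freeman code of this extension is a strict prefix $P$ of the pattern $E(a,b)=E(z_n)$. Let $E'_1,\ldots,E'_m$ be the CDP edges on the right of $E$ that are entirely contained in $MS$, listed in order. By \RefProposition{prop:edge-pattern}, each $E'_i$ is $E(a'_i,b'_i)^{f_i}$, and by convexity of the CDP their slopes strictly decrease: $a/b > a'_1/b'_1 > \cdots > a'_m/b'_m \ge 0$. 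The goal is to show $m\le n$, and I proceed by strong induction on $n$.

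For the inductive step at odd complexity $n=2i+1$, I use $E(z_n)=E(z_{n-1})^{u_n}E(z_{n-2})$ (the even case $E(z_n)=E(z_{n-2})E(z_{n-1})^{u_n}$ is treated symmetrically). I split on the slope of the first edge. If $a'_1/b'_1=z_{n-1}$, then $E'_1=E(z_{n-1})^{f_1}$ with $1\le f_1\le u_n$. When $f_1=u_n$, the remainder of $P$ is a strict prefix of $E(z_{n-2})$, so by induction on complexity $n-2$ there are at most $n-2$ further edges, giving $m\le n-1$. When $f_1<u_n$, the remainder of $P$ starts with a further copy of $E(z_{n-1})$; a Fine--Wilf-style periodicity argument forbids any subsequent pattern $E(s)^f$ with $s<z_{n-1}$ from matching two consecutive $E(z_{n-1})$-blocks (that would force $E(s)$ to be a power of $E(z_{n-1})$, contradicting $s<z_{n-1}$). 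Thus $E'_2,\ldots,E'_m$ fit inside a prefix of a single copy of $E(z_{n-1})$, and induction on complexity $n-1$ gives at most $n-1$ such edges, yielding $m\le n$. The remaining case $a'_1/b'_1<z_{n-1}$ is handled by the same periodicity argument, which confines $E'_1$ (and a fortiori all later $E'_i$) to a prefix of the first $E(z_{n-1})$ copy of $P$; induction on complexity $n-1$ then gives $m\le n-1$.

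The base case $n=2$ is done by direct inspection of $E(z_2)=E(z_0)E(z_1)^{u_2}$: the possible first-edge slopes are $z_0=0$ (after which no further decreasing-slope edge fits) or a left semiconvergent of $z_2$ such as $1/3$ or $2/5$ in the motivating example $z_2=3/7$, after which the remaining prefix admits at most one further edge of slope $0$. Combining all cases yields $m\le n$, and the symmetric argument on the left of $E$ completes the proof.

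The main obstacle is the periodicity argument underlying both branches of the inductive step: one must rule out a lower-slope pattern straddling two consecutive copies of $E(z_{n-1})$. This uses the primitivity of the Christoffel-type word $E(z_{n-1})$ together with Fine--Wilf, and is where the fine structure provided by \RefProposition{prop:odd-pattern} and \RefProposition{prop:even-pattern} (the explicit placement of leaning points inside a pattern) enters decisively.
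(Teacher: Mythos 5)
Your skeleton is close in spirit to the paper's appendix proof: both reduce to the fact that the part of $MS$ beyond $E$ is a strict factor of the pattern $E(z_n)$, both exploit the factorizations \Eq{pattern:rec:odd}--\Eq{pattern:rec:even}, and both use convexity to force strictly decreasing edge slopes. The paper, however, closes the argument with \RefLemma{lem:patternRL} and an explicit bookkeeping (four tables) of which pattern factors can follow which and with what complexity, whereas you try to shortcut this with a Fine--Wilf periodicity claim, and that is where there is a genuine gap. From ``no single edge $E(s)^f$ with $s<z_{n-1}$ can cover two consecutive $E(z_{n-1})$-blocks'' you conclude that the whole tail $E'_2,\ldots,E'_m$ is confined to a prefix of one copy of $E(z_{n-1})$. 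That does not follow: the later edges are placed end to end, and nothing in the single-edge statement prevents them from collectively advancing across several block boundaries, each edge crossing at most one boundary or none. Ruling this out is precisely the nontrivial content of the theorem; it needs an argument about the concatenation (this is what the complexity bounds (v)--(vi) of \RefLemma{lem:patternRL} and the tables supply, or alternatively a remainder/leaning-point argument), not about one tile. Moreover, even the single-edge claim is not a direct Fine--Wilf application: if $f=1$ and $|E(s)|>|E(z_{n-1})|$, a long pattern covering two blocks is not excluded by periodicity of $E(s)^f$; you would need the balance property of factors of the slope-$z_{n-1}$ line.

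Three smaller points. Your case split tacitly assumes the first edge has slope either exactly $z_{n-1}$ or strictly below it; to exclude slopes strictly between $z_{n-1}$ and $z_n$ you must note that these are Farey neighbours by \Eq{pattern:rec:dif}, so any fraction strictly between them has denominator at least $q_{n-1}+q_n$ and its pattern cannot fit into a strict prefix of $E(z_n)$ --- say so. The even case and the left side are not literally symmetric: for even $n$ the prefix of $E(z_n)=E(z_{n-2})E(z_{n-1})^{u_n}$ begins with $E(z_{n-2})$ rather than with $E(z_{n-1})$-blocks, and on the left one works with suffixes (reversed patterns) and slopes larger than $z_n$; this asymmetry is exactly why the paper needs separate constructions per side and per parity. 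Finally, the induction hypothesis must be stated as a claim about tilings of strict prefixes of $E(z_k)$ by pattern powers of strictly decreasing slopes (not about supporting edges and maximal segments), and the base case $n=2$ should be argued for general $u_1,u_2$, not on the single example $z_2=3/7$.
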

    
    \begin{corollary} \label{cor:msMaxedge}
      The shortest pattern of a supporting edge for which its maximal
      segment may contain $2n+1$ digital edge is $z_{n} =
      [0,2,\ldots,2]$. If the DCP is enclosed in a $m \times m$ grid,
      then the maximal number $n$ of digital edges included in one
      maximal segment is upper bounded as: 
      $ n \leq {\log{
      \frac{4 m}{\sqrt{2}}}} / {\log{ (1+\sqrt{2})}} - 1$.
    \end{corollary}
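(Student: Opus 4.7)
My plan is to split the corollary into two parts: first, identifying the pattern of smallest size whose supporting edge has a maximal segment containing exactly $2n+1$ digital edges; second, translating the size constraint imposed by the $m \times m$ grid into the stated logarithmic bound on $n$.

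For the first part, I will refine the upper bound of \RefTheorem{thm:msMaxedge} by determining when it is saturated. The pattern decompositions of \RefProposition{prop:odd-pattern} and \RefProposition{prop:even-pattern} expose factors of the form $E(z_{k-1})^{u_k-1}$ inside the vectors $\mathbf{U_{1}L_{1}}$ and $\mathbf{L_{1}U_{2}}$. Whenever a partial quotient $u_k$ equals $1$, one of these factors degenerates and the corresponding side of the maximal segment loses its potential to host an additional digital edge. Consequently, realising the full count of $n$ extra edges on each side of the supporting edge forces $u_k \geq 2$ for every $1 \leq k \leq n$. Under this constraint the convergents recurrence $(p_k,q_k) = u_k(p_{k-1},q_{k-1}) + (p_{k-2},q_{k-2})$ is monotone in every $u_k$, so the componentwise smallest characteristics are obtained by taking $u_k = 2$ throughout, which uniquely identifies the shortest such pattern as $z_n = [0,2,\ldots,2]$.

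For the second part, the denominators of $z_n$ satisfy $q_k = 2q_{k-1} + q_{k-2}$ with $q_0 = 1$, $q_1 = 2$. The characteristic equation $x^{2} - 2x - 1 = 0$ has roots $1 \pm \sqrt{2}$ (the silver mean and its conjugate), giving the closed form
\[
q_n \;=\; \frac{(1+\sqrt{2})^{n+1} - (1-\sqrt{2})^{n+1}}{2\sqrt{2}}.
\]
Because a supporting edge of that complexity contains at least one copy of the pattern, its horizontal extent $q_n$ must satisfy $q_n \leq m$. Using $|1-\sqrt{2}|^{n+1} \leq 1$ to discard the second term, the closed form yields $(1+\sqrt{2})^{n+1} \leq 2\sqrt{2}\, m + 1$, and taking logarithms in base $1+\sqrt{2}$ together with the identity $2\sqrt{2} = 4/\sqrt{2}$ delivers $n \leq \log(4m/\sqrt{2})/\log(1+\sqrt{2}) - 1$.

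The main obstacle lies in the structural claim of the first part, namely that every partial quotient must be at least $2$ in order to saturate \RefTheorem{thm:msMaxedge}. Establishing this rigorously requires sharpening the proof of that theorem (sketched in Appendix~B) into a tight characterization, by tracking how each level of the continued-fraction decomposition contributes exactly one additional edge on each side, and by verifying that the degeneration triggered by $u_k = 1$ cannot be compensated at deeper levels. Once this point is secured, the monotonicity of the convergents and the asymptotic analysis of the linear recurrence are routine.
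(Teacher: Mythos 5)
Your proposal is correct and follows essentially the same route as the paper: the requirement that every partial quotient be at least $2$ (which the paper also defers to the saturation condition stated at the end of the Appendix~B proof of \RefTheorem{thm:msMaxedge}), minimality then forcing $z_n=[0,2,\ldots,2]$, the Pell recurrence $q_k=2q_{k-1}+q_{k-2}$ solved via the roots $1\pm\sqrt{2}$ (your $q_n$ is exactly the paper's $U_{n+1}$), and the grid constraint $q_n\le m$ yielding the logarithmic bound. The only cosmetic difference is that you motivate the $u_k\ge 2$ condition through the factors exposed in \RefProposition{prop:odd-pattern} and \RefProposition{prop:even-pattern} rather than through the edge tables of Appendix~B, and your handling of the $(1-\sqrt{2})^{n+1}$ term is if anything slightly more careful than the paper's.
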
 
    \begin{proof}
    The number $L=[0,2,\ldots,2,\ldots]$ is a quadratic number equal
    to $-1+\sqrt{2}$. Its recursive characterization is $ U_{n} = 2
    U_{n-1} + U_{n-2}$ with $U_{0} = 0$ and $U_{1} = 1$.  Solving it
    leads to $U_{n} = \frac{\sqrt{2}}{4} \left( (1+\sqrt{2})^{n} -
    (1-\sqrt{2})^{n} \right)$. Hence asymptotically, $ U_{n} \approx
    \frac{\sqrt{2}}{4} (1+\sqrt{2})^{n}$ and
    $\lim_{n \rightarrow \infty }{\frac{U_{n}}{U_{n+1}}} = L$.
    
    The shortest edge of slope complexity $n$ is clearly an $n$-th
    convergent of $L$. To fit into an $m \times m$ grid, the
    complexity $n$ is such that $U_{n+1} \leq m$. We thus obtain that
    $ n \leq { \log{ \frac{4 m}{\sqrt{2}}}}/ {\log{ (1+\sqrt{2})}} -
    1$.\qed
    \end{proof}
    
    
      
      
    \begin{proposition} 
      There exists at most two maximal segments per vertices in the
      configuration of \RefLemma{lem:ms2llp} with different parities
      of complexity.
    \end{proposition}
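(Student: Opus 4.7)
The plan is to show that, once a vertex $V_{k}$ of the CDP is fixed together with its two incident digital edges $[V_{k-1}V_{k}]$ and $[V_{k}V_{k+1}]$, the full combinatorial description of a maximal segment $MS$ in the configuration of \RefLemma{lem:ms2llp} (having $V_{k}$ as its unique upper leaning point and only two consecutive lower leaning points $L_{1},L_{2}$) is completely forced modulo the parity of the complexity of its slope. This gives at most one such $MS$ of odd complexity and at most one of even complexity at $V_{k}$, hence at most two in total, necessarily of different parities.

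First I would assume $MS$ has odd complexity $n=2i+1$, write its slope as $a/b = z_{2i+1}$, and identify $U_{2}\equiv V_{k}$ as in \RefFig{fig:aop-pattern}. By \RefProposition{prop:odd-pattern}, $E(z_{2i-1})^{u_{2i}}$ is a right factor of $[L_{1}U_{2}]$; using the symmetry $\mathbf{U_{2}L_{2}}=\mathbf{U_{1}L_{1}}$ together with \Eq{pattern:rec:odd}, the word $E(z_{2i})^{u_{2i+1}-1}$ appears as a left factor of $[U_{2}L_{2}]$. Because $MS$ has only $V_{k}$ as upper leaning point, $[L_{1}U_{2}]$ must be contained in $[V_{k-1}V_{k}]$ and $[U_{2}L_{2}]$ in $[V_{k}V_{k+1}]$; otherwise $V_{k-1}$ or $V_{k+1}$ would lie on the standard line of $MS$ and become an additional upper leaning point, contradicting the configuration. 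Consequently $E(z_{2i-1})^{u_{2i}}$ is a right factor of the digital edge $[V_{k-1}V_{k}]$ and $E(z_{2i})^{u_{2i+1}-1}$ a left factor of $[V_{k}V_{k+1}]$. Invoking \RefProposition{prop:edge-pattern}, each such edge is a pure power of its own primitive pattern, so uniqueness of the pattern decomposition of a standard line forces the slope of $[V_{k-1}V_{k}]$ to equal the convergent $z_{2i-1}$ and that of $[V_{k}V_{k+1}]$ to equal $z_{2i}$. The partial quotients $u_{2i}$ and $u_{2i+1}$ are then read off from the exact number of pattern repetitions in each edge, and \Eq{pattern:rec} determines $z_{2i+1}=a/b$, so at most one odd-complexity $MS$ can sit at $V_{k}$. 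The even-complexity case $n=2i$ is handled symmetrically via \RefProposition{prop:even-pattern}: left/right roles swap, the slopes of $[V_{k-1}V_{k}]$ and $[V_{k}V_{k+1}]$ are forced to be $z_{2i-2}$ and $z_{2i-1}$ (up to orientation), and $u_{2i-1},u_{2i}$ are again determined by the edge pattern counts.

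The hard part will be making the forcing step fully rigorous. Two ingredients are needed. First, the pattern decomposition of a digital-line slope produced by its continued fraction is unique, so no higher-order pattern can accidentally contain $E(z_{2i-1})^{u_{2i}}$ as a right factor without being built on the very same primitive $E(z_{2i-1})$; this rules out spurious matches when combining the inclusion with \RefProposition{prop:edge-pattern}. Second, if $[V_{k-1}V_{k}]$ contained strictly more than $u_{2i}$ copies of $E(z_{2i-1})$, the extra copy would place $U_{1}$ inside the edge and hence inside $MS$, violating the one-upper-leaning-point hypothesis; this pins the multiplicity count exactly, and the same argument on the right gives the exact multiplicity of $E(z_{2i})$ in $[V_{k}V_{k+1}]$. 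Once these two points are checked, the continued fraction expansion of $MS$ is fully determined by the local CDP data and the proposition follows.
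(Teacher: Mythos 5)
Your overall plan --- fix $V_k$ and its two incident digital edges and show that the continued fraction of the slope of a maximal segment in the configuration of \RefLemma{lem:ms2llp} is forced by this local data, separately for each parity --- is a legitimate strategy, and it differs from the paper's (which argues by contradiction from two coexisting maximal segments of the same parity, and is in fact left unfinished in the source). But the two ``forcing'' steps you yourself flag as the hard part are precisely where the argument fails, and the justifications you offer for them are incorrect. First, the slope of $[V_{k-1}V_k]$ is \emph{not} forced to equal $z_{2i-1}$ by the fact that $E(z_{2i-1})^{u_{2i}}$ is a right factor of that edge: uniqueness of the continued-fraction decomposition of a given slope does not prevent patterns of \emph{other} slopes from ending in that same word. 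By \Eq{pattern:rec:even}, $E(z_{2i})=E(z_{2i-2})E(z_{2i-1})^{u_{2i}}$ already has $E(z_{2i-1})^{u_{2i}}$ as a right factor, and so do $E(z_{2i+1})$, $E(z_{2i+2})$, etc.; the edge could a priori be a power of a primitive pattern of a different slope. Ruling this out requires a real argument combining convexity of the edge slopes around $a/b$ with the fact that $V_{k-1}$ cannot be a second upper leaning point of $MS$. Relatedly, your claim that $V_{k-1}$ lying inside $MS$ ``would become an additional upper leaning point'' is false in general: \RefTheorem{thm:msMaxedge} shows a maximal segment may span many edges and hence contain many vertices that are not its upper leaning points, so the containments $[L_1U_2]\subseteq[V_{k-1}V_k]$ and $[U_2L_2]\subseteq[V_kV_{k+1}]$ are not established (the paper's proof of \RefProposition{prp:lMS-LUL} is careful to claim only that the factor $E(z_{2i-1})^{u_{2i}}$ lies inside the edge, not all of $[L_1U_2]$).

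Second, the multiplicity-pinning step is wrong as stated. If $[V_{k-1}V_k]=E(z_{2i-1})^{g}$ with $g\ge u_{2i}+1$, the extra copy does not place $U_1$ on the curve: $U_1=V_k-\VEC{q_{2i+1}}{p_{2i+1}}$ is not an integer multiple of $\VEC{q_{2i-1}}{p_{2i-1}}$ away from $V_k$, and the point $V_k-(u_{2i}+1)\VEC{q_{2i-1}}{p_{2i-1}}$ has remainder $(u_{2i}+1)u_{2i+1}>0$ with respect to the upper leaning line of $MS$ (using $p_{2i+1}q_{2i-1}-q_{2i+1}p_{2i-1}=-u_{2i+1}$ from \Eq{pattern:rec:dif} and \Eq{pattern:rec}), so it is not an upper leaning point; the symmetric claim on $[V_kV_{k+1}]$ fails for the same reason. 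Hence $u_{2i+1}$ is not recovered by your argument. Finally, even granting all the forcing, your case split only separates depth $2i+1$ from depth $2i$; it does not by itself exclude two maximal segments of odd complexities $2i+1$ and $2i'+1$ with $i\neq i'$ coexisting at $V_k$, which the statement requires. As it stands the proposal is an outline whose critical steps are unproved, with two of the supporting claims actually false, so it cannot be accepted as a proof.
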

    \begin{proof}
      We first prove that there is at most one maximal segment with
      only one upper leaning point on a vertice of a DCP with an even
      complexity.  
      
      Let us suppose that $MS_{1}$ and $MS_{2}$ are two maximal
      segments sharing a vertice of the CDP (say $U_{2}$) with their
      slopes

    \end{proof}

    \subsection{Asymptotic number and size of maximal segments}
    We assume in this section that the digital convex polygon $\Gamma$
    is enclosed in a $m \times m$ grid. We wish to compute a lower
    bound for the number of edges related to at least one maximal
    segment. We show in \RefTheorem{thm:number-of-edges-wrt-MS} that
    this number is significant and increases at least as fast as the
    number of edges of the DCP divided by $\log m$. From this lower
    bound, we are able to find an upper bound for the length of the
    smallest maximal segment of a DCP
    (\RefTheorem{thm:length-smallest-ms}). We first label each vertex
    of the DCP as follows: (i) a {\em 2-vertex} is an upper leaning
    point of a supporting edge, (ii) a {\em 1-vertex} is an upper
    leaning point of some maximal segment but is not a 2-vertex, (iii)
    {\em 0-vertices} are all the remaining vertices.  The number of
    $i$-vertices is denoted by $n_i$. Given an orientation on the
    digital contour, the number of edges going from an $i$-vertex to a
    $j$-vertex is denoted by $n_{ij}$.
    
    \begin{theorem}
      \label{thm:number-of-edges-wrt-MS}
      The number of supporting edges and of 1-vertices of $\Gamma$ are
      related to its number of edges with
      \begin{equation}
        \frac{n_e(\Gamma)}{\Omega(\log m)} \le n_1 + 2n_{22}.
      \end{equation}
      An immediate corollary is that there are at least
        $n_e(\Gamma)/{\Omega(\log m)}$ maximal segments.
    \end{theorem}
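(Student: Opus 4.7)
The plan is to route through $M$, the total number of maximal segments of $\Gamma$, and prove separately
\[
  n_e(\Gamma) \leq O(\log m)\cdot M \qquad\text{and}\qquad M \leq O(n_1 + n_{22}).
\]
The composition gives the theorem, and the first inequality alone is the stated corollary on the number of maximal segments.

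For the first inequality, every digital edge of $\Gamma$ is a pattern (or a succession of the same pattern) by \RefProposition{prop:edge-pattern}, hence a DSS, and is therefore contained in at least one maximal segment (apply the front operator $F$ at one of its endpoints). By \RefCorollary{cor:msMaxedge}, every maximal segment of $\Gamma$ contains at most $2n+1 = O(\log m)$ digital edges. Summing over maximal segments, and using that every edge is covered at least once, yields $n_e(\Gamma) \leq O(\log m)\cdot M$.

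For the second inequality I would partition $M$ by the number of upper leaning points of each maximal segment. By \RefLemma{lem:msSE}, a maximal segment with two or more ULPs contains a unique supporting edge, and each supporting edge defines a unique such maximal segment; since the two endpoints of a supporting edge are 2-vertices (\RefDefinition{def:supporting-edge}), this class contributes at most $n_{22}$ segments. A maximal segment with exactly one ULP falls in the configuration of \RefLemma{lem:ms2llp}, and by the proposition immediately preceding the theorem, each CDP vertex supports at most two such segments; the single ULP being an ULP of some MS, it is a 1- or 2-vertex, so this class contributes at most $2(n_1+n_2)$ segments. A degree count on supporting edges (every 2-vertex is an endpoint of at least one, and each supporting edge has two) gives $n_2 \leq 2n_{22}$, so combining $M \leq n_{22} + 2(n_1+n_2) \leq 2n_1 + 5n_{22} = O(n_1 + 2n_{22})$. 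The step I expect to require the most care is the assertion that \emph{every} maximal segment has $O(\log m)$ digital edges, since \RefTheorem{thm:msMaxedge} is phrased only for those containing a supporting edge; for the remaining case one should invoke \RefProposition{prp:lMS-LUL} to bound the length of a single-ULP segment in terms of two adjacent CDP edges and conclude it contains only $O(1)$ digital edges.
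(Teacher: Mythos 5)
Your decomposition is genuinely different from the paper's. The paper never introduces the total number $M$ of maximal segments: it labels every vertex as a 0-, 1- or 2-vertex, expands $n_e(\Gamma)=\sum_{i,j}n_{ij}$, bounds $n_{02}+n_{12}\le n_{22}$, $n_{20}+n_{21}\le n_{22}$, $n_{01}+n_{10}+n_{11}\le 3n_1$, and, crucially, $n_{00}\le(n_1+n_2)\,\Omega(\log m)$ --- the last because a maximal segment contains at most $O(\log m)$ digital edges, so a run of consecutive 0-vertices must soon meet a 1- or 2-vertex. Your route through $M$ (first $n_e(\Gamma)\le O(\log m)\cdot M$, then $M\le O(n_1+2n_{22})$) is structurally cleaner in one respect: the stated corollary on the number of maximal segments falls out of the first inequality alone, whereas in the paper it needs a further (unwritten) step relating $n_1+2n_{22}$ back to a count of maximal segments (note that not every 22-edge is a supporting edge). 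Your second inequality is sound: the case split into segments with a supporting edge (at most $n_{22}$ of them, by \RefLemma{lem:msSE}) and single-upper-leaning-point segments (at most $2(n_1+n_2)$ of them, by \RefLemma{lem:ms2llp} and the preceding proposition), together with $n_2\le 2n_{22}$, is exhaustive and uses the same lemmas the paper relies on.

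The one genuine gap is exactly where you flagged it: the claim that a maximal segment in the configuration of \RefLemma{lem:ms2llp} contains only $O(1)$ digital edges because of \RefProposition{prp:lMS-LUL}. That proposition gives a \emph{length} budget, $\Lnorm{MS_{k'}}\le 4\left(\Lnorm{V_{k-1}V_{k}}+\Lnorm{V_{k}V_{k+1}}\right)$, not an edge count. If the two edges adjacent to $V_k$ are long, the budget is large and the segment may extend past $V_{k-1}$ or $V_{k+1}$ and swallow many very short edges, so no constant bound on the number of included edges follows. What you actually need --- and what the paper itself silently assumes when it writes that ``a DSS hence a maximal segment cannot include more than $\Omega(\log m)$ edges'' --- is that the $O(\log m)$ edge-count bound holds for an arbitrary DSS, not only for maximal segments containing a supporting edge. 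The right tool is the decreasing-slope/decreasing-complexity argument of \RefLemma{lem:patternRL} and Appendix B: the digital edges included in any DSS of slope $z_n$ are patterns with strictly decreasing slopes whose complexities decrease along the segment, so there are at most $O(n)=O(\log m)$ of them in an $m\times m$ grid; this applies verbatim to the single-upper-leaning-point case. With that substitution your proof goes through --- an $O(\log m)$ bound there is all your architecture requires, and $O(1)$ was never needed.
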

    \begin{proof}
      From \RefTheorem{thm:msMaxedge} and its
      \RefCorollary{cor:msMaxedge}, we know that a DSS hence a maximal
      segment cannot include more than $\Omega(\log m)$ edges. Hence
      there cannot be more than $\Omega(\log m)$ 0-vertices for one
      1-vertex or for one 2-vertex. We get $n_{00} \le (n_1 + n_2)
      \Omega(\log m)$.  We develop the number of edges with each
      possible label: $n_e(\Gamma) = n_{22} + n_{02}+n_{12} + n_{20}+
      n_{21} + n_{00} + n_{01}+n_{10}+n_{11}$. Since, $n_{02}+n_{12}
      \le n_{22}$, $n_{20}+n_{21} \le n_{22}$ and
      $n_{01}+n_{10}+n_{11} \le 3n_{1}$, we get $n_e(\Gamma) \le
      3n_{22} + n_{00} +3 n_1$. Noting that a 2-vertex cannot be
      isolated by definition of supporting edges
      (\RefDefinition{def:supporting-edge}) gives $n_2 \le
      2n_{22}$. Once inserted in $n_{00} \le (n_1 + n_2) \Omega(\log
      m)$ and compared with $n_e(\Gamma)$, we get the expected result.
      \qed
    \end{proof}
    
    We now relate the DCP perimeter to the length of maximal segments.
    
    \begin{theorem}
      \label{thm:length-smallest-ms}
      The length of the smallest maximal segment of the DCP $\Gamma$
      is upper bounded:
      \begin{equation}
        \min_{l} \Lnorm{MS_{l}} \le \Omega(\log
        m)\frac{\mathrm{Per}(\Gamma)}{n_e(\Gamma)}.
      \end{equation}
    \end{theorem}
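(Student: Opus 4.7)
The plan is to bound the \emph{total} length of all maximal segments by $O(\mathrm{Per}(\Gamma))$ and then divide by the number of maximal segments, which is at least $n_e(\Gamma)/\Omega(\log m)$ by \RefTheorem{thm:number-of-edges-wrt-MS}. Since the minimum length is at most the average length, this will immediately yield the announced bound.

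First I would partition the set of maximal segments of $\Gamma$ using the dichotomy provided by \RefLemma{lem:msSE} and \RefLemma{lem:ms2llp}. Family (A) consists of those maximal segments that possess a supporting edge (equivalently, those with at least two upper leaning points or at least three lower leaning points). Family (B) consists of those maximal segments that have exactly one upper leaning point, which by \RefLemma{lem:ms2llp} is then a vertex of $\Gamma$, and exactly two lower leaning points. Together the two families exhaust all maximal segments of $\Gamma$.

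Next I would bound each family separately. For family (A), \RefProposition{prop:lMS-SE} gives $\Lnorm{MS} \le 3\,\Lnorm{E}$ where $E$ is the supporting edge of $MS$, and the assignment $MS\mapsto E$ is injective by \RefLemma{lem:msSE}, so
\[
\sum_{MS \in (A)} \Lnorm{MS} \;\le\; 3 \sum_{E\text{ supp.}} \Lnorm{E} \;\le\; 3\,\mathrm{Per}(\Gamma).
\]
For family (B), \RefProposition{prp:lMS-LUL} bounds $\Lnorm{MS} \le 4(\Lnorm{V_{k-1}V_k} + \Lnorm{V_k V_{k+1}})$ where $V_k$ is the unique upper leaning point of $MS$. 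Using the proposition stated immediately before this theorem, each vertex of $\Gamma$ hosts at most two family-(B) maximal segments (one per parity of complexity), so each edge of $\Gamma$ appears in these inequalities a bounded number of times (at most four, since it has two endpoints). Summing yields $\sum_{MS\in (B)} \Lnorm{MS} \le C_B\,\mathrm{Per}(\Gamma)$ for a small absolute constant $C_B$. Combining the two cases provides an absolute constant $C$ with $\sum_l \Lnorm{MS_l} \le C\,\mathrm{Per}(\Gamma)$.

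Finally I would combine this with the lower bound $N \ge n_e(\Gamma)/\Omega(\log m)$ on the number $N$ of maximal segments from \RefTheorem{thm:number-of-edges-wrt-MS} and the elementary inequality $\min_l \Lnorm{MS_l} \le \frac{1}{N}\sum_l \Lnorm{MS_l}$ to conclude
\[
\min_l \Lnorm{MS_l} \;\le\; C\,\Omega(\log m)\,\frac{\mathrm{Per}(\Gamma)}{n_e(\Gamma)}.
\]
The delicate step is the double-counting argument for family (B): one must verify that each edge of $\Gamma$ is charged at most a bounded number of times in the sum of \RefProposition{prp:lMS-LUL} inequalities, which relies squarely on the ``at most two per vertex'' statement preceding this theorem. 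Once that accounting is made careful, the remainder of the argument is a routine chain of inequalities.
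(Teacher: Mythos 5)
Your argument is correct, but the accounting is genuinely different from the paper's, and the difference matters. The paper does not sum over \emph{all} maximal segments: it lower-bounds $\mathrm{Per}(\Gamma)$ by selecting, for each supporting edge, its unique maximal segment (via \RefProposition{prop:lMS-SE}) and, for each 1-vertex, \emph{one} maximal segment having that vertex as upper leaning point (via \RefProposition{prp:lMS-LUL}, with the factor $\tfrac12$ absorbing the double-counting of edges incident to 1-vertices). This yields $\mathrm{Per}(\Gamma) \ge \tfrac18 (n_1 + 2n_{22}) \min_l \Lnorm{MS_l}$ directly, and \RefTheorem{thm:number-of-edges-wrt-MS} is then applied to the quantity $n_1+2n_{22}$ itself rather than to the total number of maximal segments. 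Your route instead upper-bounds $\sum_l \Lnorm{MS_l}$ by $O(\mathrm{Per}(\Gamma))$ and uses the min-versus-average inequality over \emph{all} maximal segments; this forces you to control the multiplicity with which a single vertex can serve as the unique upper leaning point of a family-(B) segment, hence your reliance on the ``at most two maximal segments per vertex'' proposition. Be aware that this is the weakest link: the paper states that proposition but its proof is left visibly unfinished in the text, whereas the paper's own selection argument for \RefTheorem{thm:length-smallest-ms} sidesteps it entirely (one segment per 1-vertex suffices for a lower bound on the perimeter). Your approach buys a cleaner, more symmetric statement (total length of all maximal segments is $O(\mathrm{Per}(\Gamma))$, a fact of independent interest), at the price of this extra dependency; the paper's buys a self-contained proof from results that are actually established. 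The exhaustiveness of your dichotomy (A)/(B) is fine, since a DSS with at most one upper and at most two lower leaning points must have exactly two lower leaning points and, by interleaving, exactly one upper leaning point, which is the configuration of \RefLemma{lem:ms2llp}.
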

    \begin{proof}
      We have $Per(\Gamma) = \sum_{n_{e}}\Lnorm{E_{i}}$. We now may
      expand the sum on supporting edges (22-edges), on edges touching a
      1-vertex, and on others.  Edges touching 1-vertices may be counted
      twice, therefore we divide by 2 their contribution to the total
      length.
      \begin{eqnarray}
        \label{equ:per-1}
        \sum_{n_{e}}\Lnorm{E_{i}} & \ge & \sum_{n_{22}} \Lnorm{E^{22}_{j}} +
        \frac{1}{2}\sum_{n_1} \Lnorm{E^{?1}_{k-1}} + \Lnorm{E^{1?}_{k}}
      \end{eqnarray}
      For the first term, each supporting edge indexed by $j$ (a $22$-edge)
      has an associated maximal segment, say indexed by $j'$. From
      \RefProposition{prop:lMS-SE}, we know that $\Lnorm{E^{22}_{j}} \ge
      \frac{1}{3}\Lnorm{MS_{j'}}$.
      
      For the second term, each 1-vertex indexed by $k$ is an upper leaning
      point of some maximal segment indexed by
      $k'$. \RefProposition{prp:lMS-LUL} holds and $\Lnorm{E^{?1}_{k-1}} +
      \Lnorm{E^{1?}_{k}} \ge \frac{1}{4}\Lnorm{MS_{k'}}$.
      
      Putting everything together in \Equ{equ:per-1}, we get:
      \[
      \sum_{n_{e}}\Lnorm{E_{i}}  \ge 
      \frac{1}{3}\sum_{n_{22}}\Lnorm{MS_{j'}}
      + \frac{1}{8}\sum_{n_{1}}\Lnorm{MS_{k'}}
      \ge  (\frac{1}{3} n_{22} + \frac{1}{8} n_{1}) \min_{l} \Lnorm{MS_{l}}
      \ge  \frac{1}{8}( n_{1} + 2n_{22}) \min_{l} \Lnorm{MS_{l}}
      \]
      
      Inserting the lower bound of \RefTheorem{thm:number-of-edges-wrt-MS}
      into the last inequality concludes. \qed
    \end{proof}
    

\section{Asymptotic properties of shapes digitized at increasing resolutions}

We may now turn to the main interest of the paper: studying the
asymptotic properties of discrete geometric estimators on digitized
shapes. We therefore consider a plane convex body $S$ which is
contained the square $[0,1] \times [0,1]$ (w.l.o.g.). Furthermore, we
assume that its boundary $\gamma=\partial S$ is $\mathcal{C}^3$ with
everywhere strictly positive curvature.  This assumption is not very
restrictive since people are mostly interested in regular
shapes. Furthermore, the results of this section remains valid if the
shape can be divided into a {\em finite} number of convex and concave
parts; each one is then treated separately. The digitization of $S$
with step $1/m$ defines a digital convex polygon $\Gamma(m)$ inscribed
in a $m \times m$ grid. We first examine the asymptotic behavior of
the maximal segments of $\Gamma(m)$, both theoretically and
experimentally. We then study the {\em asymptotic convergence} of a
discrete curvature estimator. 

    \subsection{Asymptotic behavior of maximal segments}
    The next theorem summarizes the asymptotic size of the
    smallest maximal segment wrt the grid size $m$.
    
    \begin{theorem}
      \label{thm:asymptotic-size-ms}
      The length of the smallest maximal segment of $\Gamma(m)$ has the
      following asymptotic upper bound:
      \begin{equation}
        \min_{i} \Lnorm{MS_{i}(\Gamma(m))} \le \Omega(m^{1/3} \log
        m)
      \end{equation}
    \end{theorem}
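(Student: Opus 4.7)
The plan is a straightforward combination of the two key tools developed in the previous sections: the per-polygon inequality of Theorem~\ref{thm:length-smallest-ms} and the Balog--B\'ar\'any asymptotic count of vertices in Theorem~\ref{thm:Balog:nbV1}. First I would instantiate Theorem~\ref{thm:length-smallest-ms} on the polygon $\Gamma(m)$, giving
\[
\min_{l} \mathcal{L}^1(MS_l(\Gamma(m))) \;\le\; \Omega(\log m)\,\frac{\mathrm{Per}(\Gamma(m))}{n_e(\Gamma(m))}.
\]
Then I would control the two terms of the ratio separately.

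For the numerator, I would bound $\mathrm{Per}(\Gamma(m)) = O(m)$. Since $S \subseteq [0,1]^2$ is convex with $\mathcal{C}^3$ positive-curvature boundary, its Euclidean perimeter $L$ is a finite constant depending only on $S$. The 4-connected digital contour of $\mathcal{D}_m(S)$ sits in a bounded tubular neighbourhood of $\partial(m\cdot S)$, and its city-block length is classically within a constant factor of the Euclidean length of the scaled boundary, hence at most $c(S)\, m$ for some constant $c(S)$. (Any standard digitization perimeter bound suffices here; one can also observe directly that the digital contour of a convex set inscribed in an $m\times m$ grid has $\mathcal{L}^1$-length at most $4m$.)

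For the denominator, I would invoke Theorem~\ref{thm:Balog:nbV1}, which under the very hypotheses we have put on $S$ gives $n_e(\Gamma(m)) \ge c_1(S)\, m^{2/3}$. Substituting both estimates into the inequality above yields
\[
\min_{l} \mathcal{L}^1(MS_l(\Gamma(m))) \;\le\; \Omega(\log m)\cdot \frac{c(S)\,m}{c_1(S)\,m^{2/3}} \;=\; \Omega(m^{1/3}\log m),
\]
which is the claimed bound.

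There is no real obstacle, since both essential ingredients are already established; the only subtlety is to make sure the $\mathcal{C}^3$/positive-curvature hypothesis is explicitly used so that Theorem~\ref{thm:Balog:nbV1} applies and that $\mathrm{Per}(\Gamma(m))$ is indeed $O(m)$ with a constant depending only on $S$. If I wanted to be very careful about the perimeter bound, I would note that by convexity $\mathrm{Per}(\Gamma(m))$ is bounded above by the perimeter of the enclosing $\lceil m\rceil \times \lceil m\rceil$ square, namely at most $4m+O(1)$, which is amply sufficient.
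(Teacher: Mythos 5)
Your proposal is correct and follows essentially the same route as the paper: instantiate Theorem~\ref{thm:length-smallest-ms} on $\Gamma(m)$, bound $\mathrm{Per}(\Gamma(m))$ by $4m$ using convexity within the $m \times m$ grid, and lower-bound $n_e(\Gamma(m))$ by $c_1(S)m^{2/3}$ via Theorem~\ref{thm:Balog:nbV1}. No gaps; the extra care you take over the perimeter bound matches the paper's one-line justification.
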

    \begin{proof}
      \RefTheorem{thm:length-smallest-ms} gives for the DCP $\Gamma(m)$
      the inequality $\min_{i} \Lnorm{MS_{i}(\Gamma(m))} \le \Omega(\log
      m)\frac{\mathrm{Per}(\Gamma(m))}{n_e(\Gamma(m))}$. Since $\Gamma(m)$
      is convex included in the subset $m \times m$ of the digital plane,
      its perimeter $\mathrm{Per}(\Gamma(m))$ is upper bounded by $4m$. On
      the other hand, \RefTheorem{thm:Balog:nbV1} indicates that its
      number of edges $n_e(\Gamma(m))$ is lower bounded by
      $c_1(S)m^{2/3}$. Putting everything together gives $\min_{i}
      \Lnorm{MS_{i}(\Gamma(m))} \le \Omega(\log m)
      \frac{4m}{c_1(S)m^{2/3}}$ which is once reduced what we wanted to
      show.
      \qed
    \end{proof}
    
    Although there are points on a shape boundary around which maximal
    segments grow as fast as $O(m^{1/2})$ (the critical points in
    \cite{Lachaud05c}), some of them do not grow as fast. A closer
    look at the proofs of \RefTheorem{thm:length-smallest-ms} shows
    that a significant part of the maximal segments (at least
    $\Omega(1/(\log m))$) has an average length that grows no faster
    than $\Omega(m^{1/3} \log m)$. This fact is confirmed with
    experiments. \RefFig{fig:ms-curv}, left, plots the size of maximal
    segments for a disk digitized with increasing
    resolution. The average size is closer to $m^{1/3}$ than
    to $\sqrt{m}$.
    
    \begin{figure}
      \begin{center}
        \epsfig{file=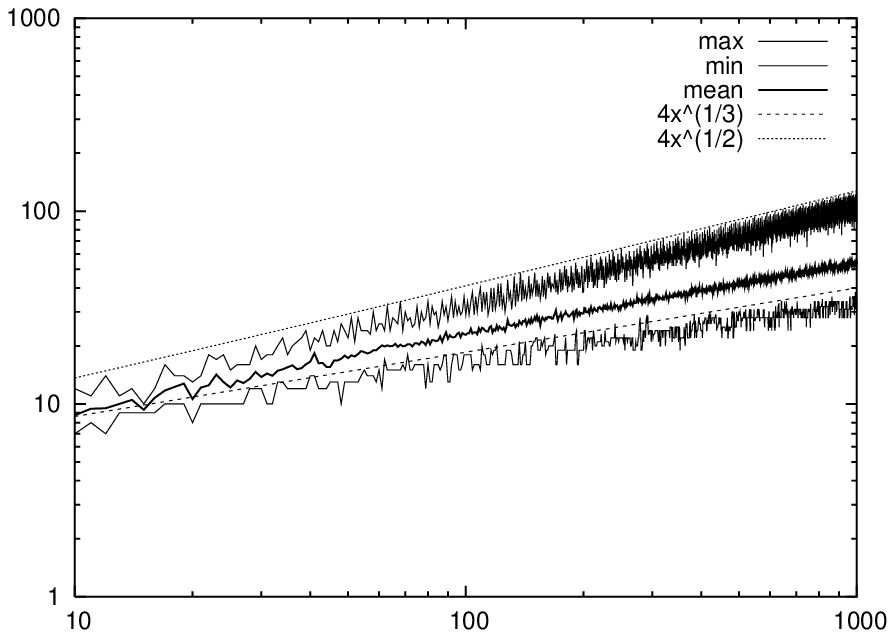,width=0.45\textwidth}
        \epsfig{file=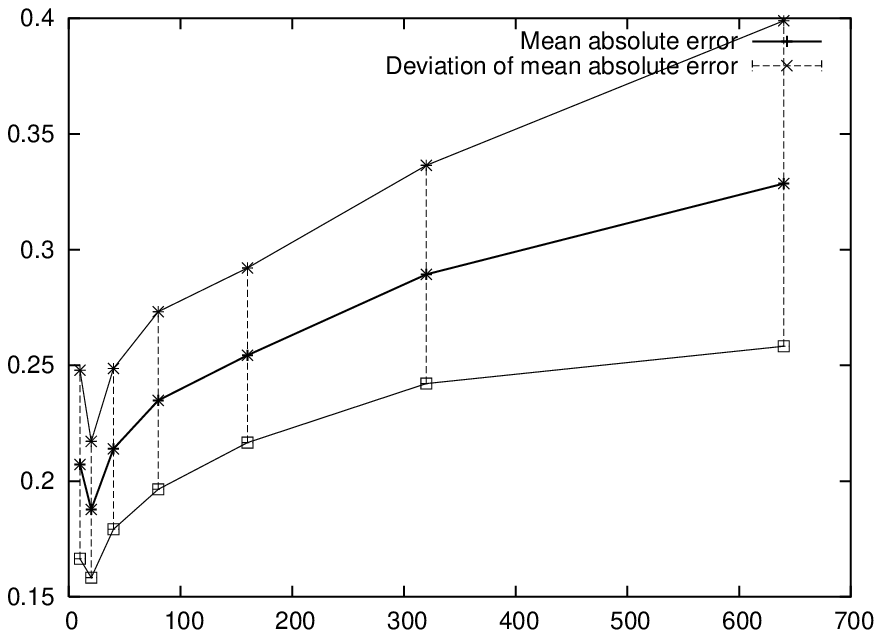,width=0.45\textwidth}
      \end{center}
      \caption{For both curves, the digitized shape is a disk of radius 1
        and the abscissa is the digitization resolution. Left: plot in
        log-space of the $\mathcal{L}^1$-size of maximal segments. Right:
        plot of the mean and standard deviation of the absolute error of
        curvature estimation, $|\hat{\kappa} - 1|$ (expected curvature is
        1).
        \label{fig:ms-curv}}
    \end{figure}

    \subsection{Asymptotic convergence of discrete geometric estimators}
    A useful property that a discrete geometric estimator may have is to
    converge toward the geometric quantity of the continuous shape
    boundary when the digitization grid gets finer
    \cite{Coeurjolly02,Coeurjolly03,Klette00}.  It may be expressed as
    follows,
    \begin{definition}
      Let $\mathcal{F}$ be any geometric descriptor on the shape $S$
      with boundary $\gamma$ and digitizations $\Gamma(m)$. The
      discrete geometric estimator $\mathcal{E}$ {\em asymptotically
        converges} toward the descriptor $\mathcal{F}$ for $\gamma$
      iff
      \begin{equation}
        \left| \mathcal{E}(\Gamma(m)) - \mathcal{F}(\gamma) \right|
        \le \epsilon(m) \mathrm{~with~} \lim_{m \rightarrow +\infty} \epsilon(m) =
        0.
      \end{equation}
    \end{definition}
    We now recall the definition of a discrete curvature estimator
    based on DSS recognition \cite{Coeurjolly02}.
    \begin{definition}
      \label{def:curvature-by-circumcircle}
      Let $P$ be any point on a discrete contour, $Q=B(P)$ and
      $R=F(P)$ are the extremities of the longest DSS starting from
      $P$ (called {\em half-tangents}). Then the {\em curvature
      estimator by circumcircle} $\hat{\kappa}(P)$ is the inverse of
      the radius of the circle circumscribed to $P$, $Q$ and $R$,
      rescaled by the resolution $m$.
    \end{definition}
    Experiments show that this estimator rather correctly estimates
    the curvature of discrete circles {\em on average} ($\approx 10$\%
    error). It is indeed better than any other curvature estimators
    proposed in the litterature.  Theorem~B.4 of \cite{Coeurjolly02}
    demonstrates the {\em asymptotic convergence} of this curvature
    estimator, subject to the hypothesis:
    \begin{hypo}
      \label{hyp:coeurjolly}
      Half-tangents on digitized boundaries grow at a rate of
      $\Omega(\sqrt{m})$ with the resolution $m$.
    \end{hypo}
    However, with our study of maximal segments, we can state that
    
    \begin{claim}
      Hypothesis~\ref{hyp:coeurjolly} is {\em not} verified for
      digitizations of $C^3$-curves with strictly positive curvature.
      We cannot conclude on the asymptotic convergence of the
      curvature estimator by circumcircle.
    \end{claim}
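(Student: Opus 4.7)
The plan is to obtain the claim directly from \RefTheorem{thm:asymptotic-size-ms}: once we know that the shortest maximal segment of $\Gamma(m)$ has length $\Omega(m^{1/3} \log m)$, it will suffice to exhibit a single point on the digital boundary whose half-tangent inherits this length. Since $m^{1/3} \log m = o(\sqrt{m})$, this immediately contradicts Hypothesis~\ref{hyp:coeurjolly}. The challenge is therefore not analytic but rather a careful reading of the definition of half-tangents versus that of maximal segments.

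First, I would recall characterization (2) of \RefDefinition{def:maximal-segment}: any maximal segment $\PTS{i}{j}$ satisfies $B(j)=i$ and $F(i)=j$. In the notation of \RefDefinition{def:curvature-by-circumcircle}, this means that if $P^{\ast}=\PT{j}$ is the rightmost point of a maximal segment $MS^{\ast}=\PTS{i}{j}$, then the backward half-tangent $\PTS{B(P^{\ast})}{P^{\ast}}$ coincides exactly with $MS^{\ast}$, and therefore its $\mathcal{L}^{1}$-length is $\Lnorm{MS^{\ast}}$. (Symmetrically, the forward half-tangent from the leftmost point $\PT{i}$ of $MS^{\ast}$ is $MS^{\ast}$ itself.)

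Next, I would apply this observation to the shortest maximal segment of $\Gamma(m)$, whose length is bounded by $\Omega(m^{1/3}\log m)$ by \RefTheorem{thm:asymptotic-size-ms}. Choosing $P^{\ast}$ to be its right endpoint produces, for every resolution $m$, a boundary point of $\Gamma(m)$ whose backward half-tangent has length at most $\Omega(m^{1/3}\log m)$. Consequently there is no constant $c>0$ such that $\Lnorm{\PTS{B(P)}{P}} \ge c\sqrt{m}$ holds uniformly at every point $P$ of $\Gamma(m)$ and for all sufficiently large $m$, which is the negation of Hypothesis~\ref{hyp:coeurjolly}.

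Finally, since Theorem~B.4 of \cite{Coeurjolly02} derives asymptotic convergence of $\hat{\kappa}$ only as a conditional consequence of Hypothesis~\ref{hyp:coeurjolly}, invalidating this hypothesis removes its applicability to digitizations of $\mathcal{C}^{3}$-convex shapes with strictly positive curvature. The delicate point to emphasize, and what I expect to be the main interpretive obstacle rather than a technical one, is that failure of the hypothesis does \emph{not} prove non-convergence of the circumcircle estimator; it only withdraws its proof. I would close the discussion by pointing to \RefFig{fig:ms-curv} (right), whose persistent non-vanishing error on a digitized disk offers empirical corroboration that convergence is at best slow and possibly absent.
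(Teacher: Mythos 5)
Your proof is correct and follows essentially the same route as the paper: both arguments reduce the claim to \RefTheorem{thm:asymptotic-size-ms} by observing that some half-tangent is no longer than the smallest maximal segment, whose length grows like $m^{1/3}\log m = o(\sqrt{m})$. Your use of characterization (2) of \RefDefinition{def:maximal-segment} to identify the backward half-tangent at the right endpoint of the shortest maximal segment with that segment itself is simply a more explicit rendering of the paper's statement that ``some half-tangents will be included in the smallest maximal segments,'' and your closing caveat --- that refuting the hypothesis withdraws the convergence proof without establishing non-convergence --- matches the paper's own conclusion.
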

    \begin{proof}
      It is enough to note that half-tangents, being DSS, are included in
      maximal segments and may not be longer. Furthermore, since maximal
      segments cover the whole digital contour, some half-tangents will be
      included in the smallest maximal segments. Since the smallest
      maximal segments are no longer than $\Omega(m^{1/3} \log
      m)$ (\RefTheorem{thm:asymptotic-size-ms}), the length of some
      half-tangents has the same upper bound, which is smaller than
      $\Omega(\sqrt{m})$.
      \qed
    \end{proof}   
    
    The asymptotic convergence of a curvature estimator is thus still
    an open problem. Furthermore, precise experimental evaluation of
    this estimator indicates that it is most certainly not
    asymptotically convergent, although it is actually on average one
    of the most stable discrete curvature estimator (see
    \RefFig{fig:ms-curv}, right). Former experimental evaluations of this
    estimator were averaging the curvature estimates on all contour
    points. The convergence of the average of all curvatures does not
    induce the convergence of the curvature at one point.

    \section{Conclusion}
    We show in this paper the relations between edges of convex hulls
    and maximal segments in terms of number and sizes. We provide an
    asymptotical analysis of the worst cases of both measures. A
    consequence of the study is the refutation of an hypothesis
    related to the asymptotic growth of maximal segments and which was
    essential in proving the convergence of a curvature
    estimator based on DSS and circumcircles
    \cite{Coeurjolly02}. Our work also applied to digital tangents
    since their convergence relies on the same hypothesis.  The existence of
    a convergent discrete estimator of curvature based on DSS is thus
    still a challenging problem and we are currently investigating it.

    \bibliographystyle{plain}
    \bibliography{biblio}

    \appendix 

    \section{Proof of \RefProposition{prop:even-pattern}}

    \noindent {\bf Proposition~\ref{prop:even-pattern}:}
    {\em A pattern with an even complexity (say $n=2i$) is such that
    $\mathbf{U_{1}L_{1}}=\VEC{q_{2i-1} + 1}{p_{2i-1} -1 }$ and
    $\mathbf{L_{1}U_{2}}= (u_{2i}-1)\VEC{q_{2i-1}}{p_{2i-1}} + \VEC{
      q_{2i-2}}{p_{2i-2}} + \VEC{-1}{1}$.  Moreover the DSS
    $[U_{1},L_{1}]$ has $E(z_{2i-2})^{u_{2i-1}}$ as a left factor, and
    the DSS $[L_{1}, U_{2}]$ has $E(z_{2i-1})^{u_{2i}-1}$ as a right
    factor.}
    \smallskip 

    \begin{proof} From \Eq{pattern:rec:dif} we have:
    $p_{2i}q_{2i-1} - p_{2i-1}q_{2i} = (-1)^{2i+1} = -1$, which can be
    rewritten as: $a ( -q_{2i-1}) - b (-p_{2i-1}) =1$ and eventually
    $a ( q_{2i} -q_{2i-1}) - b (p_{2i} -p_{2i-1}) =1$. We clearly
    obtain the Bézout coefficients. From its remainder we get the
    relatives coordinates of $L_{1}$, as: $\mathbf{U_{1}L_{1}}=
    \VEC{q_{2i-1}+1}{p_{2i-1} -1 }$.  From $\mathbf{L_{1}U_{2}} =
    -\mathbf{U_{1}L_{1}} + \mathbf{U_{1}U_{2}}$ we get :
    $\mathbf{L_{1}U_{2}} = \VEC{(u_{2i} - 1)q_{2i-1} + q_{2i-2} -
    1}{(u_{2i} - 1) p_{2i-1} + p_{2i-2} + 1}$.  Using
    $E(z_{2i})=E(z_{2i-2})^{u_{2i-1}+1}E(z_{2i-3})E(z_{2i-1})^{u_{2i}
    - 1}$ and $\mathbf{U_{1}L_{1}} = \VEC{u_{2i-1}q_{2i-2}+q_{2i-3} +
    1}{u_{2i-1}p_{2i-2}+p_{2i-3} - 1}$, it is clear that
    $E(z_{2i-2})^{u_{2i-1}}$ is a left factor of the DSS
    $[U_{1}$$L_{1}]$. From \Eq{pattern:rec:even} and
    $\mathbf{L_{1}U_{2}}$ we clearly see that $E(z_{2i-1})^{u_{2i}-1}$
    is a right factor of the DSS $[L_{1}U_{2}]$. \qed
    \end{proof}


    \section{Proof of \RefTheorem{thm:msMaxedge}}
    
    \begin{lemma}    
      \label{lem:patternRL}
      We call $P_{n}$ a pattern of complexity $n$ whose Freeman code
      is $E(z_{n})$. One can build strict right and left factors
      (called respectively $R$ and $L$) of $P_{n}$ such that:
      \begin{itemize}
	\item[(i)] $[RP_{n}]$, $[P_{n}L]$ and $[RP_{n}L]$ are DSS of
	  slope $z_{n}$,
	\item [(ii)] $R$ and $L$ are patterns (or successions of the
	  same pattern) ,
	\item [(iii)] $RP_{n}$, $P_{n}L$ and $RP_{n}L$ are not
	  patterns,
	\item [(iv)] the slope of $R$ is greater than that of $P_{n}$
	  and the slope of $P_{n}$ is greater than that of $L$,

	\item [(v)] maximal complexity of slope of $R$ and $L$ depends
	  on parity of $n$: \\
	  \begin{tabular}{|c|c|c|}
	    \hline 
	    Complexity of $P_{n}$ & maximal complexity of $R$ & maximal complexity of $L$ \\
	    \hline
	    $2i+1$                & $2i+1$                    & $2i$                      \\
	    \hline
	    $2i$                  & $2i-1$                    & $2i$                      \\
	    \hline
	  \end{tabular}
	  
	  \item [(vi)] Complexity of factors obtained by substracting
	    $R$ or $L$ from $P_{n}$ depends on parity of $n$: \\
	  \begin{tabular}{|c|c|c|}
	    \hline 
	    Complexity of $P_{n}$ & complexity of $P_{n} \smallsetminus R$ & complexity of $P_{n} \smallsetminus L$ \\
	    \hline
	    $2i+1$                & $2i$                                   & $2i+1$                                 \\
	    \hline 
	    $2i$                  & $2i$                                   & $2i-1$                                 \\
	    \hline
	  \end{tabular}
	  
      \end{itemize}
    \end{lemma}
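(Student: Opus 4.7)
The plan is to construct $R$ and $L$ explicitly by peeling off the outermost layer of the recursive decomposition of $E(z_n)$ from \Eq{pattern:rec:odd} and \Eq{pattern:rec:even}, treating the two parities of $n$ dually. For odd $n=2i+1$, where $P_n = E(z_{2i})^{u_{2i+1}}E(z_{2i-1})$, I would take $R = E(z_{2i})^{u_{2i+1}-1}E(z_{2i-1})$, namely the longest strict suffix of $P_n$ that still forms a pattern (of slope $[0,u_1,\ldots,u_{2i},u_{2i+1}-1]$, complexity $2i+1$), and $L = E(z_{2i})$, a single copy of a pattern of complexity $2i$. For even $n = 2i$, where $P_n = E(z_{2i-2})E(z_{2i-1})^{u_{2i}}$, the dual choice is $R = E(z_{2i-1})$ of complexity $2i-1$ and $L = E(z_{2i-2})E(z_{2i-1})^{u_{2i}-1}$ of complexity $2i$. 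With these explicit forms, the residual words $P_n\setminus R$ and $P_n\setminus L$ can be read off directly, giving the complexities claimed in (vi).

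Properties (ii) and (v) are then immediate from the construction. For (i), since $R$ is a suffix and $L$ a prefix of $P_n$, the word $RP_nL$ is the Freeman code of a contiguous stretch of the infinite standard line of slope $z_n$, so it automatically lies inside the slope-$z_n$ band, hence is a DSS of slope $z_n$; moreover, the slope cannot shrink to anything narrower because the inner pattern $P_n$ already contains the upper leaning points $U_1, U_2$ and the lower leaning point $L_1$, pinning the enclosing band at width exactly $|a|+|b|$. For (iii), the strict-factor condition gives $|R|<|P_n|$ and $|L|<|P_n|$, so the two endpoints of $RP_nL$ lie strictly between $U_0$ and $U_1$ on the left and between $U_2$ and $U_3$ on the right, and therefore are not upper leaning points of slope $z_n$; consequently $RP_nL$ is not a pattern of its own slope $z_n$, and since a DSS has a unique minimal slope, it is not a pattern of any slope at all. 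For (iv), the slopes are computed directly from displacements: in the odd case $L=E(z_{2i})$ has slope $z_{2i}$ which lies below $z_{2i+1}$ by the alternation of convergents, while the slope of $R$ equals $(p_n-p_{2i})/(q_n-q_{2i})$ and exceeds $z_n$ by exactly $(p_nq_{2i}-p_{2i}q_n)/(q_n(q_n-q_{2i}))=1/(q_n(q_n-q_{2i}))>0$ via \Eq{pattern:rec:dif}; the even case is symmetric.

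The most delicate step is the minimality half of (i), namely showing that the DSS slope of $RP_nL$ is exactly $z_n$ rather than some coarser slope corresponding to a wider band; this relies on the uniqueness of the minimal enclosing standard-line band for a finite point set combined with the explicit presence of three leaning points inside $P_n$. A secondary technical point is handling degenerate cases of the recursion, notably $u_{2i+1}=1$ in the odd case (for which $R$ collapses to $E(z_{2i-1})$ with complexity only $2i-1$, still consistent with (v) read as an upper bound) and the small values $n=1,2$ where $p_{-1}, q_{-1}$ must be interpreted with the standard conventions $p_{-1}=1$, $q_{-1}=0$.
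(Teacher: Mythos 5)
Your construction is correct and follows essentially the same route as the paper's proof: the paper peels off $R = E(z_{2i})^{u_{2i+1}-r}E(z_{2i-1})$, $L = E(z_{2i})^{u_{2i+1}-l}$ in the odd case and $R=E(z_{2i-1})^{u_{2i}-r}$, $L=E(z_{2i-2})E(z_{2i-1})^{u_{2i}-l}$ in the even case for arbitrary $r,l>0$ (your choices are particular instances of this family) and verifies the slope ordering via \Eq{pattern:rec} and \Eq{pattern:rec:dif} exactly as you do. Your added care about the minimal characteristics of $[RP_nL]$ and about degenerate partial quotients $u_k=1$ goes slightly beyond the paper, which only handles the latter by skipping such steps in the proof of \RefTheorem{thm:msMaxedge}.
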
  
    \begin{proof}
      Since $R$ and $L$ are strict factors of $P_{n}$, their Freeman
      moves are compatible with those of $E(z_{n})$, giving same slope
      when $R$,$P_{n}$ and $L$ are put together. Thus $[RP_{n}L]$ is a
      DSS of slope $z_{n}$. From digital straightness we clearly have
      digital convexity (see \cite{KIM:PAMI:1982}). Upper leaning
      points of this DSS are located at extremities of $P_{n}$.
      
      We simply choose among strict factors $R$ and $L$ those that are
      patterns so that they fit descriptions given in
      \Eq{pattern:rec:odd} and \Eq{pattern:rec:even}. We may now
      describe them given the parity of $n$.
      
      Consider the case where $n$ is odd (say $n=2i+1$), from
      \Eq{pattern:rec:odd} we get: $R = E(z_{2i})^{u_{2i+1} -
      r}E(z_{2i-1})$ and $L= E(z_{2i})^{u_{2i+1} - l}$ with $r>0$ and
      $l>0$. If $R$ and $L$ are longer patterns, they are not anymore
      strict factors of $P_{2i+1}$.  We see that $R$ is a pattern of
      complexity $2i+1$ and that $L$ is a succession of the pattern
      $E(z_{2i})$, with a complexity of $2i$.  The slope of $R$ equals
      $z'_{2i+1}=[0,u_{1},\ldots,u_{2i}, u_{2i+1}-r] =
      \frac{p'_{2i+1}}{q'_{2i+1}}$. From \Eq{pattern:rec} we get that
      $\frac{p_{2i+1}}{q_{2i+1}} = \frac{p'_{2i+1} +
      rp_{2i}}{q'_{2i+1}+ rq_{2i}}$. The sign of $z'_{2i+1} -
      z_{2i+1}$ is that of $p'_{2i+1}q_{2i}-q'_{2i+1}p_{2i}$, and is
      positive (see \Eq{pattern:rec:dif}). Thus the slope of $R$ is
      greater than that of $P_{2i+1}$. Same reasoning applied to
      $z_{2i+1} - z_{2i}$ brings that the slope of $P_{2i+1}$ is
      greater than that of $L$. Factor obtained by substracting $R$
      from $P_{2i+1}$ equals $E(z_{2i})^{r}$ and substracting $L$ from
      $P_{2i+1}$ gives $E(z_{2i})^{l}E(z_{2i-1})$.
      
      Consider now that $n$ is even (say $n=2i$), from
      \Eq{pattern:rec:even} we get: $R=E(z_{2i-1})^{u_{2i} - r}$ and
      $L=E(z_{2i-2})E(z_{2i-1})^{u_{2i} - l}$. If $R$ and $L$ are
      longer patterns, they are not anymore strict factors of
      $P_{2i}$. Clearly, $R$ has a complexity of $2i-1$ and that of
      $L$ equals $2i$.  The slope of $L$ equals $z'_{2i} =
      [0,u_{1},\ldots,u_{2i-1}, u_{2i}-l] = \frac{p'_{2i}}{q'_{2i}}$.
      From \Eq{pattern:rec} we get that $\frac{p_{2i}}{q_{2i}} =
      \frac{p'_{2i} + lp_{2i-1}}{q'_{2i}+ lq_{2i-1}}$. The sign of
      $z_{2i} - z'_{2i}$ is that of $q'_{2i}p_{2i-1}-p'_{2i}q_{2i-1}$,
      and is positive (see \Eq{pattern:rec:dif}). Thus the slope of
      $P_{n}$ is greater than that of $L$. Same reasoning applied to
      $z_{2i-1} - z_{2i}$ brings that the slope of $R$ is greater than
      that of $P_{n}$.  Factor obtained by substracting $R$ from
      $P_{2i}$ equals $E(z_{2i-2})E(z_{2i-1})^{r}$ and substracting
      $L$ from $P_{2i}$ gives $E(z_{2i-1})^{l}$.

      It is now clear that slopes are strictly decreasing from $R$ to
      $P_{n}$ and from $P_{n}$ to $L$ whatever the parity of $n$. \qed
      
    \end{proof}
       
    \begin{theorem} 
      Let $E$ be a supporting edge whose slope has a complexity $n$,
      $n \ge 2$, then the maximal segment containing $E$ includes at
      most $n$ other edges on each side of $E$.
    \end{theorem}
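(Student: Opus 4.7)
The plan is to prove the bound by induction on the complexity $n$ of the slope of $E$, exploiting the recursive pattern structure codified in Lemma~\ref{lem:patternRL}.

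\emph{Setup.} By Lemma~\ref{lem:msSE}, $MS$ shares the slope $z_n$ of $E$, and the two vertices of $E$ are the only upper leaning points of $z_n$ contained in $MS$. Hence the extension of $MS$ beyond the right vertex of $E$ has Freeman code equal to a strict prefix $X$ of $P_n = E(z_n)$: a full copy of $P_n$ would close at another upper leaning point of slope $z_n$, contradicting the definition of a supporting edge. The left extension is handled symmetrically. Let $E_1,\ldots,E_k$ denote the CDP edges entirely contained in $X$; by Proposition~\ref{prop:edge-pattern} each $E_j$ is a succession of a characteristic pattern of some slope $\alpha_j$ of complexity $n_j$, and by convexity the slopes satisfy $z_n > \alpha_1 > \cdots > \alpha_k$.

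\emph{Core claim.} I plan to reduce the theorem to the statement that the complexities form a strictly decreasing sequence $n-1 \ge n_1 > n_2 > \cdots > n_k \ge 0$, which immediately yields $k \le n$. The initial bound $n_1 \le n-1$ follows from Lemma~\ref{lem:patternRL}(v) applied to $P_n$ (which caps the complexity of any pattern prefix), combined with the convexity constraint $\alpha_1 < z_n$ ruling out $n_1 = n$. The strict decrease $n_{j+1} < n_j$ is then obtained by iterating: after $E_j$ is consumed, the residual portion of $P_n$ is itself, by the explicit forms given in property~(vi) of the lemma, a pattern whose structure forces any subsequent pattern-prefix to have complexity strictly below $n_j$.

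\emph{Main obstacle.} The delicate case is the odd-parity situation of Lemma~\ref{lem:patternRL}(vi), where removing the longest pattern prefix from an odd-complexity pattern leaves a residual of the same raw CF depth, so a naive induction does not reduce. The plan is to combine the convexity constraint $\alpha_{j+1} < \alpha_j$ with the alternation of continued-fraction convergents: any candidate slope of complexity equal to $n_j$ but strictly smaller than $\alpha_j$ would correspond to extending $\alpha_j$'s continued fraction by at least two additional terms, producing a value strictly \emph{greater} than $\alpha_j$, which contradicts $\alpha_{j+1} < \alpha_j$. This forces a genuine complexity drop and closes the induction.

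\emph{Base case and conclusion.} For $n=2$, direct inspection of $P_2 = E(z_0)\,E(z_1)^{u_2}$ shows that its strict prefixes admit at most one pattern-edge of complexity $1$ (a prefix of the form $E(z_0)\,E(z_1)^j$ reinterpreted as a pattern of first partial quotient $u_1+1$) followed by at most one of complexity $0$ (a slope-$0$ tail), whence $k \le 2$. Applying the bound symmetrically on both sides of $E$ gives the claimed bound of $n$ additional edges on each side. \qed
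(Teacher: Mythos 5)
Your overall strategy --- decompose the extension of $MS$ beyond $E$ into successive CDP edges, note that each is a pattern or a power of a pattern (Proposition~\ref{prop:edge-pattern}), use convexity to order the slopes, and count edges via continued-fraction complexity --- is exactly the paper's. But your central quantitative claim, that the complexities satisfy $n-1 \ge n_1 > n_2 > \cdots > n_k \ge 0$, is false, and the paper's own extremal constructions (Tables~\ref{tab:construction-odd} and~\ref{tab:construction-even}) refute it on both counts. First, consecutive edges of \emph{equal} complexity do occur: for $n=2i+1$ the worst case to the right of $E$ begins with $L_1=E(z_{2i})^{u_{2i+1}-l_1}$ (slope $z_{2i}$, complexity $2i$) followed by $L_2=E(z_{2i-2})E(z_{2i-1})^{u_{2i}-l_2}$ (slope $[0,u_1,\ldots,u_{2i-1},u_{2i}-l_2]<z_{2i}$, complexity again $2i$); the slopes strictly decrease, so convexity is perfectly compatible with a repeated complexity. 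Consequently your resolution of the ``main obstacle'' would prove something false: a slope of the same complexity and smaller value need not arise by appending two partial quotients to $\alpha_j$ --- it can arise by decrementing the last partial quotient of an even-depth expansion, which \emph{decreases} the value. Second, your initial bound $n_1\le n-1$ and the assertion that the left side is ``handled symmetrically'' both fail: Lemma~\ref{lem:patternRL}(v) is deliberately asymmetric in left versus right factors, and to the left of an odd-complexity supporting edge the first edge $R_1=E(z_{2i})^{u_{2i+1}-r_1}E(z_{2i-1})$ already has complexity $n=2i+1$.

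The bound of $n$ edges per side is still correct, but it comes from finer bookkeeping than strict monotonicity: the complexity drops by $2$ every \emph{two} edges (one edge of each ``type'' per level of the continued-fraction decomposition, which is the content of properties (v)--(vi) of Lemma~\ref{lem:patternRL} applied to the successive residual factors), with the starting level and the phase of the alternation depending both on the parity of $n$ and on which side of $E$ one sits. Summing over levels then gives exactly $n$ edges on each side. As written, your induction does not close: allowing the repetitions that actually occur, you would still need to show that at most one repetition per complexity level is possible, and that is precisely the residual-factor analysis your sketch replaces with the incorrect strict-decrease claim.
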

    
    \begin{proof}
      We construct $2n$ digital edges around $E$:
      \begin{itemize}
      \item $(R_{i})_{1 \leq i \leq n}$ at left of $E$, 
      \item $(L_{i})_{1 \leq i \leq n}$ at right of $E$.
      \end{itemize}
      
      These edges are such that $[R_{n} \ldots R_{i} \ldots R_{1} E
	L_{1} \ldots L_{j} \ldots L_{n}]$ is a DSS of slope $z_{n} =
      a/b$ and has no other upper leaning points but those located on
      $E$. $E$ may contain several times the pattern $E(z_{n})$. It
      is clear that $R_{n} \ldots R_{i} \ldots R_{1}$ (resp. $L_{1}
      \ldots L_{j} \ldots L_{n}$) has to be a right (resp. left)
      strict factor of $E(z_{n})$. Moreover $R_{i}$ is a right
      strict factor of $E(z_{n}) \smallsetminus R_{1}\ldots R_{i-1}$
	and $L_{i}$ is a left strict factor of $E(z_{n})
	\smallsetminus L_{1}\ldots L_{i-1}$. From
	\RefProposition{prop:edge-pattern} any of the digital edges
	$(R_{i})_{1 \leq i \leq n}$ and $(L_{i})_{1 \leq i \leq n}$ is
	a pattern or a succession of the same pattern.  From
	\Eq{pattern:rec:odd} and \Eq{pattern:rec:even} two successive
	digital edges with same complexity (say $n$) cannot form a
	right or left strict factor of a pattern with same
	complexity. Thus complexities of $(R_{i})_{1 \leq i \leq n}$
	and $(L_{i})_{1 \leq i \leq n}$ are decreasing when $i$
	increases. Moreover to fullfil convexity properties, slopes of
	edges are decreasing from $R_{n}$ to $L_{n}$.
	
	We now build $(R_{i})_{1 \leq i \leq n}$ when n is odd (say
	$n=2i+1$). From \RefLemma{lem:patternRL}, $R_{1}$ has a
	complexity that equals $2i+1$ and $R_{2}$ is a right strict
	factor of a pattern whose complexity equals $2i$. Applying
	\RefLemma{lem:patternRL} brings $R_{2}$ with a complexity of
	$2i-1$. Applying the same reasoning recursively brings other
	edges as shown on Table~\ref{tab:construction-odd}. \RefLemma{lem:patternRL}
	also implies decreasing slopes and give upper bounds in
	complexity of factors.

	Constructions for the three other cases are given in Tables
	~\ref{tab:construction-odd} and ~\ref{tab:construction-even}
	and follow the same reasoning. To satisfy full decomposition
	each $(u_{k})_{1 \leq n}$ has to be equal or greater than
	$2$. If this condition is not meet for some $k$, than steps
	associated with it (e.g. any factors containing $u_{k}-r_{j}$
	or $u_{k}-l_{j}$ as powers of some pattern) are skipped.  This
	concludes the proof. \qed
    \end{proof}
    
    \begin{table}[ht]
      \begin{center}\leavevmode
	\caption{Constructions of $(R_{i})_{1 \leq i \leq n}$ and $(L_{i})_{1 \leq i \leq n}$ given $n$ odd . \label{tab:construction-odd}}
	\begin{tabular}{c}
	  
	  Constructions of $(R_{i})_{1 \leq i \leq n}$ when $n=2i+1$ \\
	  \begin{minipage}[c]{\textwidth}
	     \begin{center}\leavevmode
	       $ \begin{array}{|c|c|c|}
		 \hline
		 \textrm{Factor}                   &  \textrm{Freeman moves}                     & \textrm{Complexity} \\
		 \hline
		 R_{1}                             &  E(z_{2i})^{u_{2i+1} - r_{1} }E(z_{2i-1})   & 2i+1  \\
		 \hline
		 R_{2}                             &  E(z_{2i-1})^{u_{2i} - r_{2}}               & 2i-1  \\
		 \hline
		 R_{3}                             &  E(z_{2i-2})^{u_{2i-1} - r_{3}}E(z_{2i-3})  & 2i-1  \\ 
		 \hline
		 R_{4}                             &  E(z_{2i-3})^{u_{2i-2} - r_{4}}             & 2i-3  \\
		 \hline
		 \vdots                            &      \vdots                                 & \vdots\\
		 \hline
		 R_{2j}                            &  E(z_{2i+1 - 2j})^{u_{2i+2-2j} - r_{2j}}    & 2i+1- 2j \\
		 \hline
		 R_{2j+1}                          &  E(z_{2i- 2j})^{u_{2i+1 - 2j} - r_{2j+1}} E(z_{2i -1 - 2j }) & 2i+1 -2j\\
		 \hline
		 \vdots                            &      \vdots                                 & \vdots\\
		 \hline
		 R_{2i+1}                          &  0^{u_{1}-r_{2i+1}}1                        &  1 \\
		 \hline
	       \end{array} $
	     \end{center}
	  \end{minipage}
	  \vspace {0.3cm}
	  \\
	  Constructions of $(L_{i})_{1 \leq i \leq n}$ when $n=2i+1$ \\	  
	  \begin{minipage}[c]{\textwidth}
	    \begin{center}\leavevmode
	      $ \begin{array}{|c|c|c|}
		\hline
		\textrm{Factor}                   &  \textrm{Freeman moves}                     & \textrm{Complexity} \\
		\hline
		L_{1}                             &  E(z_{2i})^{u_{2i+1} - l_{1} }              & 2i \\
		\hline
		L_{2}                             &  E(z_{2i-2})E(z_{2i-1})^{u_{2i} - l_{2}}    & 2i \\
		\hline
		L_{3}                             &  E(z_{2i-2})^{u_{2i-1} - l_{3}}             & 2i-2 \\
		\hline
		L_{4}                             &  E(z_{2i-4}) E(z_{2i-3})^{u_{2i-2} - l_{4}} & 2i-2 \\ 
		\hline
		\vdots                            &      \vdots                                 & \vdots \\
		\hline
		L_{2j}                            &  E(z_{2i - 2j})E(z_{2i+1-2j})^{u_{2i+2-2j} - l_{2j}}   & 2i+2-2j \\
		\hline
		L_{2j+1}                          &  E(z_{2i- 2j})^{u_{2i+1 - 2j} - l_{2j+1}}              & 2i-2j \\
		\hline
		\vdots                            &      \vdots                                 & \vdots \\
		\hline
		L_{2i+1}                          &  0^{u_{1}-l_{2i+1}}                         &  0 \\
		\hline
	      \end{array} $
	    \end{center}
	  \end{minipage}
	\end{tabular}
      \end{center}
    \end{table}
	  
    \begin{table}[ht]
      \begin{center}\leavevmode
	\caption{Constructions of $(R_{i})_{1 \leq i \leq n}$ and $(L_{i})_{1 \leq i \leq n}$ given $n$ even. \label{tab:construction-even}}
	\begin{tabular}{c}
	  Constructions of $(R_{i})_{1 \leq i \leq n}$ when  $n=2i$ \\
	  \begin{minipage}[c]{\textwidth}
	    \begin{center}\leavevmode
	      
	      $ \begin{array}{|c|c|c|}
		\hline
		\textrm{Factor}                   &  \textrm{Freeman moves}                     & \textrm{Complexity} \\
		\hline
		R_{1}                             &  E(z_{2i-1})^{u_{2i} - r_{1}}               & 2i-1 \\
		\hline
		R_{2}                             &  E(z_{2i-2})^{u_{2i-1} - r_{2}}E(z_{2i-3})  & 2i-1 \\ 
		\hline
		R_{3}                             &  E(z_{2i-3})^{u_{2i-2} - r_{3}}             & 2i-3 \\
		\hline
		R_{4}                             &  E(z_{2i-4})^{u_{2i-3} - r_{4} }E(z_{2i-5}) & 2i-3 \\
		\hline
		\vdots                            &      \vdots                                 & \vdots \\
		\hline
		R_{2j}                            &  E(z_{2i- 2j})^{u_{2i+1 - 2j} - r_{2j}} E(z_{2i -1 - 2j }) & 2i+1-2j \\
		\hline
		R_{2j+1}                          &  E(z_{2i -1 - 2j})^{u_{2i-2j} - r_{2j+1}}   & 2i-1 -2j \\
		\hline
		\vdots                            &      \vdots                                 & \vdots \\
		\hline
		R_{2i}                            &  0^{u_{1}-r_{2i}}1                          & 1  \\
		\hline
	      \end{array} $
	    \end{center}
	  \end{minipage}
	  \vspace {0.3cm}
	  \\
	  Constructions of $(L_{i})_{1 \leq i \leq n}$ when  $n=2i$ \\	  	    
	  \begin{minipage}[c]{\textwidth}
	    \begin{center}\leavevmode
	      $ \begin{array}{|c|c|c|}
		\hline
		\textrm{Factor}                   &  \textrm{Freeman moves}                     & \textrm{Complexity} \\
		\hline
		L_{1}                             &  E(z_{2i-2})E(z_{2i-1})^{u_{2i} - l_{1}}    & 2i \\
		\hline
		L_{2}                             &  E(z_{2i-2})^{u_{2i-1} - l_{2}}             & 2i-2 \\
		\hline
		L_{3}                             &  E(z_{2i-4}) E(z_{2i-3})^{u_{2i-2} - l_{3}} & 2i-2 \\ 
		\hline
		L_{4}                             &  E(z_{2i-4})^{u_{2i-3} - l_{4} }            & 2i-4   \\
		\hline
		\vdots                            &      \vdots                                 & \vdots \\
		\hline
		L_{2j}                            &  E(z_{2i- 2j})^{u_{2i+1 - 2j} - l_{2j}}     & 2i-2j \\
		\hline
		L_{2j+1}                          &  E(z_{2i -2 - 2j})E(z_{2i-1-2j})^{u_{2i-2j} - l_{2j+1}}   & 2i-2j \\
		\hline
		\vdots                            &      \vdots                                 & \vdots \\
		\hline
		L_{2i}                            &  0^{u_{1}-l_{2i}}                           & 0 \\
		\hline
	      \end{array} $
	    \end{center}
	  \end{minipage}
	  \\
	\end{tabular}
      \end{center}
    \end{table}

\end{document}